\newtheorem*{theorem*}{Theorem}
\newtheorem{definition}{Definition}
\newtheorem*{proposition*}{Proposition}
\newtheorem*{conjecture*}{Conjecture}
\newcommand{\R}{\mathbb{R}}
\DeclareMathOperator*{\argmin}{argmin}
\DeclareMathOperator*{\argmax}{argmax}
\title{Hold me tight! Influence of discriminative features on deep network boundaries}
\author{%
Guillermo Ortiz-Jiménez\thanks{Equal contribution. Correspondence to \texttt{\{guillermo.ortizjimenez, apostolos.modas\}@epfl.ch}. The code to reproduce our experiments can be found at \href{https://github.com/LTS4/hold-me-tight}{https://github.com/LTS4/hold-me-tight}.}\\
EPFL, Lausanne, Switzerland\\
\texttt{guillermo.ortizjimenez@epfl.ch}
\And
Apostolos Modas\footnotemark[1]\\
EPFL, Lausanne, Switzerland\\
\texttt{apostolos.modas@epfl.ch}
\AND
Seyed-Mohsen Moosavi-Dezfooli\\
ETH Zürich, Zurich, Switzerland\\
\texttt{seyed.moosavi@inf.ethz.ch}
\And
Pascal Frossard\\
EPFL, Lausanne, Switzerland\\
\texttt{pascal.frossard@epfl.ch}
}
\begin{document}
\maketitle
\setcounter{footnote}{0}

\begin{abstract}
Important insights towards the explainability of neural networks reside in the characteristics of their decision boundaries. In this work, we borrow tools from the field of adversarial robustness, and propose a new perspective that relates dataset features to the distance of samples to the decision boundary. This enables us to carefully tweak the position of the training samples and measure the induced changes on the boundaries of CNNs trained on large-scale vision datasets. We use this framework to reveal some intriguing properties of CNNs. Specifically, we rigorously confirm that neural networks exhibit a high invariance to non-discriminative features, and show that the decision boundaries of a DNN can only exist as long as the classifier is trained with some features that hold them together. Finally, we show that the construction of the decision boundary is extremely sensitive to small perturbations of the training samples, and that changes in certain directions can lead to sudden invariances in the orthogonal ones. This is precisely the mechanism that adversarial training uses to achieve robustness.
\end{abstract}

\section{Introduction}
The set of points that partitions the input space onto labeled regions is known as the \emph{decision boundary} of a classifier. Describing how a classifier creates such boundaries is crucial for its explainability. Interestingly, even when deep networks succeed on a task, their high vulnerability to imperceptible perturbations~\cite{szegedyIntriguingPropertiesNeural2014,goodfellowExplainingHarnessingAdversarial2015} implies that their boundaries lie alarmingly close to any input sample. This unintuitive behaviour contradicts the common belief that a successful classifier should be invariant to non-discriminative information of its input data. However, it seems that such perturbations are not irrelevant signals, but rather discriminative features of the training set~\cite{jetleyFriendsTheseWho2018,ilyasAdversarialExamplesAre2019}.

In that sense, explaining the mechanisms that construct the decision boundary of deep neural networks is key to understand the dynamics of adversarial training~\cite{madryDeepLearningModels2018}. This training scheme only differs from standard training in that it slightly perturbs the training samples during optimization. However, these small changes can utterly change the geometry of these classifiers~\cite{moosavi-dezfooliRobustnessCurvatureRegularization2019a}.

An example of such change can be seen in Fig.~\ref{fig:introduction}, which shows the minimal perturbations -- constrained to lie on a low and a high frequency subspace -- required to flip the decision of a network. The norm of the perturbations measures the distance (margin) to the decision boundary in these subspaces. Clearly, reaching the boundary using high frequency perturbations requires much more energy than using low frequency ones~\cite{sharmaEffectivenessLowFrequency2019}. But surprisingly, when the network is adversarially trained~\cite{madryDeepLearningModels2018}, the largest increase in margin happens in the high frequency subspace. Note that, on the standard network, this distance is already much greater than the size of the training perturbations. Based on this observation, we pose the following questions:
\begin{enumerate}
\item \emph{How is the margin in different directions related to the features in the training data?}
\item \emph{How can very small perturbations significantly change the geometry of deep networks?}
\end{enumerate}

\begin{figure}[t]
\centering
\includegraphics[width=0.8\textwidth]{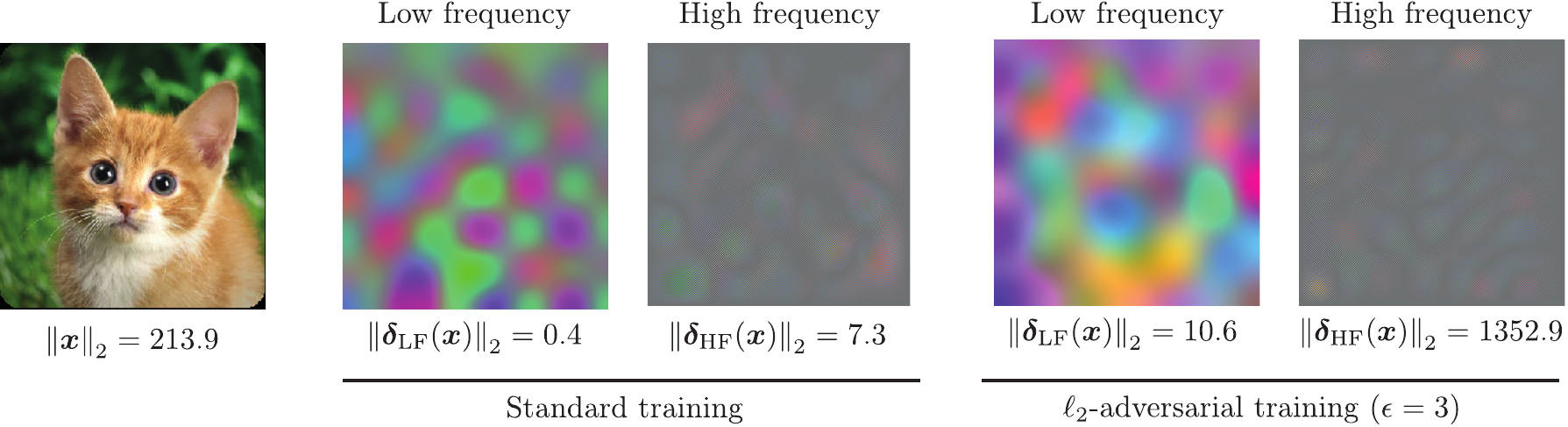}
\caption{Minimal adversarial perturbations constrained to lie in different DCT frequency bands ($8\times 8$ subspaces taken from the top left and bottom right of the $224\times224$ DCT matrix) for a ResNet-50 trained (\textbf{left}), and adversarially trained (\textbf{right}) on ImageNet.}
\label{fig:introduction}
\end{figure}

In this work, we propose a novel approach to answer these questions. In particular, we develop a new methodology to construct a local summary of the decision boundary of a neural network from margin observations along a sequence of orthogonal directions. This framework permits to carefully tweak the properties of the training samples and measure the induced changes on the boundaries of convolutional neural networks (CNNs) trained on synthetic and large-scale vision datasets (e.g.,~ImageNet). The main contributions of our work are the following:
\begin{enumerate}
    \item We provide a new perspective on the relationship between the distance of a set of samples to the boundary, and the discriminative features used by a network. We empirically support our findings by extensive evaluations on both synthetic and real datasets.
    \item Via a series of carefully designed experiments, we rigorously confirm the ``common belief'' that CNNs tend to behave as ideal classifiers and are approximately invariant to non-discriminative features of a dataset.
    \item We further show that the construction of the decision boundary is extremely sensitive to the position of the training samples, such that very small perturbations in certain directions can utterly change the decision boundaries in some orthogonal directions.
    \item Finally, we demonstrate that adversarial training exploits this training sensitivity and invariance bias to build robust classifiers.
\end{enumerate}

We  believe  that  the  perspective  proposed  in  this  paper can have implications in future research on explainability and robustness, as it gives a new way to measure and understand decision boundaries. This new framework can be used to shed light onto the dynamics and inductive bias of deep learning.

\paragraph{Related work} Since the publication of~\cite{zhangUnderstandingDeepLearning2017}, a big body of research has focused on understanding the inductive bias of deep networks as a way to explain generalization in deep learning~\cite{gunasekarCharacterizingImplicitBias2018a,belkinReconcilingModernMachinelearning2019}. Remarkably, for linear classifiers, optimizing a logistic loss using gradient descent is equivalent to maximizing margin in the input space~\cite{soudryImplicitBiasGradient2018}. Furthermore, for deep networks, some recent results suggest that margin is maximized in the logit space~\cite{liDecisionBoundaryDeep2018}.

Interestingly, recent works have established the link between adversarial perturbations and discriminative features of the training sets~\cite{jetleyFriendsTheseWho2018,ilyasAdversarialExamplesAre2019,santurkarImageSynthesisSingle2019b}. This has led to the conjecture that, in most datasets, there exist robust and non-robust features that neural networks exploit to construct their decision boundaries. What exactly are these features, and how do networks construct these boundaries is however not addressed by these authors. In this sense, the authors of~\cite{tramerInvarianceTradeOff2020} argue that the excessive invariance in the boundaries introduced by adversarial training can explain its induced decrease in accuracy. In this work, we shed light on these phenomena by describing the strong inductive bias of the networks towards invariance to non-discriminative features, and the sensitivity of training to small perturbations.

The geometric properties of the decision boundaries of deep networks have previously been studied, mainly focused on their curvature~\cite{fawziEmpiricalStudyTopology2018b, moosavi-dezfooliRobustnessCurvatureRegularization2019a}, and their decision region topology~\cite{fawziEmpiricalStudyTopology2018b, ramamurthyTopologicalDataAnalysis2019}. From a robustness perspective, the distance to the boundary has been exploited to detect adversarial examples~\cite{heDecisionBoundaryAnalysis2018} and predict generalization gap~\cite{elsayedLargeMarginDeep,jiangPredictingGeneralizationGaph2019}. Furthermore, the unusual robustness of deep networks in certain frequencies~\cite{yinFourierPerspectiveModel2019,tsuzukuStructuralSensitivityDeep2019,sharmaEffectivenessLowFrequency2019} has recently been described. In this work, however, we give a constructive explanation for this phenomenon based on the role of dataset features in shaping the margins along different frequencies.

\section{Proposed framework}
\label{subsec:experimental_framework}

Let $f:\R^D\rightarrow \R^L$ be the final layer of a neural network (i.e., logits), such that, for any input $\bm{x}\in\R^D$, $F(\bm{x})=\argmax_{k} f_k(\bm{x})$ represents the decision function of that network, where $f_k(\bm{x})$ denotes the $k$th component of $f(\bm{x})$ that corresponds to the $k$th class. The decision boundary between classes $k$ and $\ell$ of a neural network is the set $\mathcal{B}_{k,\ell}(f)=\{\bm{x}\in\R^D: f_k(\bm{x})-f_\ell(\bm{x})=0\}$ (in general, we will omit the dependency with $k,\ell$ for simplicity). Unless stated otherwise, we assume that all networks are trained using a cross-entropy loss function and some variant of (stochastic) gradient descent. We also assume that training has been conducted for many epochs, and that it has approximately converged to a local minimum of the loss, achieving $100\%$ accuracy on the training data~\cite{zhangUnderstandingDeepLearning2017}\footnote{In general all details of our experiments are listed in the Supp. material.}.

In this work, we study the role that the training set $\mathcal{T}=\{(\bm{x}^{(i)}, y^{(i)})\}_{i=0}^{N-1}$ has on the boundary $\mathcal{B}(f)$. Specifically, we propose to use adversarial proxies to measure the distribution of distances to the decision boundary along a sequence of well defined subspaces. The main quantities of interest are:

\begin{definition}[Minimal adversarial perturbations]
    Given a classifier $F$, a sample $\bm{x}\in\R^D$, and a sub-region of the input space $\mathcal{S}\subseteq\R^D$, we define the ($\ell_2$) minimal adversarial perturbation of $\bm{x}$ in $\mathcal{S}$ as $\bm{\delta}_\mathcal{S}(\bm{x})=\argmin_{\bm{\delta}\in\mathcal{S}} \|\bm{\delta}\|_2\quad\text{s.t.}\quad F(\bm{x}+\bm{\delta})\neq F(\bm{x}).$
    
    In general, we will use $\bm{\delta}(\bm{x})$ to refer to $\bm{\delta}_{\R^D}(\bm{x})$.
\end{definition}

\begin{definition}[Margin]
    The magnitude $\|\bm{\delta}_\mathcal{S}(\bm{x})\|_2$ is the margin of $\bm{x}$ in $\mathcal{S}$.
\end{definition}

Our main objective is to obtain a local summary of $\mathcal{B}(f)$ around a set of observation samples $\mathcal{O}=\{\bm{x}^{(i)}\}_{i=0}^{M-1}$ by measuring their margin in a sequence of distinct subspaces $\{\mathcal{S}_j\}_{j=0}^{R-1}$. In practice, we use a subspace-constrained version of DeepFool~\cite{moosavi-dezfooliDeepFoolSimpleAccurate2016}\footnote{We do not enforce the $[0,1]^D$ box constraints on the adversarial images, as we are not interested in finding ``plausible'' adversarial perturbations, but in measuring the distance to $\mathcal{B}(f)$.} to approximate the margins in each $\mathcal{S}_j$.

In the adversarial robustness literature, DeepFool is generally regarded as one of the most efficient methods to identify minimal adversarial perturbations. Because we want to measure margin, norm-constrained attacks like PGD~\cite{madryDeepLearningModels2018} are not suitable for our study. Besides, more complex attacks like C\&W~\cite{Carlini_Wagner}, or, even, using unconstrained gradient descent in the input space, are computationally much more demanding and harder to tune than DeepFool. Since they in general find very similar adversarial perturbations as DeepFool, we decided to opt for DeepFool in our work.

\section{Margin and discriminative features}
\label{subsec:invariance}

As known from previous studies on the robustness of deep learning~\cite{fawziRobustnessDeepNetworks2017}, the distance from a sample to the boundary of a neural network can greatly vary depending on the search direction. This behaviour is generally translated into classifiers with small margins along some directions, and large margins along the others. In this section, we show that the small margin directions are associated with discriminative directions, and we provide a constructive procedure to identify them. This helps us to shed new light into the inductive bias of the training dynamics of neural networks.

\subsection{Evidence on synthetic data}
\label{subsec:invariance_synthetic}

We want to show that neural networks only construct boundaries along discriminative features, and that they are invariant in every other direction\footnote{This is indeed a desired property for any classification method, but note that for neural networks the existence of adversarial examples contests the idea of it being a reasonable assumption.}. To this end, we generate a balanced training set $\mathcal{T}_1(\epsilon, \sigma)$  by independently sampling $N$ points $\bm{x}^{(i)}=\bm{U}(\bm{x}_1^{(i)}\oplus\bm{x}_2^{(i)})$ such that $\bm{x}_1^{(i)}=\epsilon y^{(i)}$ and $\bm{x}_2^{(i)}\sim\mathcal{N}(0, \sigma^2 \bm{I}_{D-1})$, where $\oplus$ denotes the concatenation operator, $\epsilon>0$ the feature size, and $D=100$. The labels $y^{(i)}$ are uniformly sampled from $\{-1, +1\}$. The multiplication by a random orthonormal matrix $\bm{U}\in \operatorname{SO}(D)$ is performed to avoid any possible bias of the classifier towards the canonical basis. Note that this is a linearly separable dataset with a single discriminative feature parallel to $\bm{u}_1$ (i.e., first row of $\bm{U}$), and all other dimensions filled with non-discriminative noise.

To evaluate our hypothesis, we train a heavily overparameterized multilayer perceptron (MLP) with 10 hidden layers of 500 neurons using SGD (test: $100\%$). Table~\ref{tab:invariance_synthetic} shows the margin statistics on the linearly separable direction $\bm{u}_1$; its orthogonal complement $\operatorname{span}\{\bm{u}_1\}^\perp$; a fixed random subspace of dimension S, $\mathcal{S}_\text{rand}\subset\R^D$; and a fixed random subspace of the same dimensionality, but orthogonal to $\bm{u}_1$, $\mathcal{S}_{\text{orth}}\subset\operatorname{span}\{\bm{u}_1\}^\perp$. From these values we can see that along the direction where the discriminative feature lies, the margin is much smaller than in any other direction. Therefore, we can see that the classification function of this network is only creating a boundary in $\bm{u}_1$ with median margin $\epsilon/2$, and that it is approximately invariant in $\operatorname{span}\{\bm{u}_1\}^\perp$.

\begin{wraptable}[10]{r}{0.5\textwidth}
\vspace{-0.4cm}
\caption{Margin statistics of an MLP trained on $\mathcal{T}_1(\epsilon=5, \sigma=1)$ along different directions ($N=10,000$, $M=1,000$, $S=3$).}\label{tab:invariance_synthetic}
\begin{center}
\begin{small}
\begin{sc}
\begin{tabular}{lcccc}
\toprule
& $\bm{u}_1$ & $\operatorname{span}\{\bm{u}_1\}^\perp$ & $\mathcal{S}_{\text{orth}}$ & $\mathcal{S}_{\text{rand}}$ \\
\midrule
5-perc. & $1.74$ & $4.85$ & $30.68$  & $17.21$ \\
Median & $2.50$ & $12.36$ & $102.0$  & $27.90$ \\
95-perc. & $3.22$ & $31.60$ & $229.5$  & $80.61$\\
\bottomrule
\end{tabular}
\end{sc}
\end{small}
\end{center}
\end{wraptable}

Comparing the margin values for $\mathcal{S}_{\text{orth}}$ and $\mathcal{S}_{\text{rand}}$ we see that, if the observation basis is not aligned with the features exploited by the network, the margin measurements might not be able to separate the small and large margin directions. Indeed, since $\mathcal{S}_{\text{orth}}$ is orthogonal to the only discriminative direction $\bm{u}_1$ we see that the margin values reported in this region are much higher than those reported in $\mathcal{S}_{\text{rand}}$. The reason for this is that the margin required to flip the label of a classifier in a randomly selected subspace is of the order of $\sqrt{S/D}$ with high probability~\cite{fawziRobustnessDeepNetworks2017}, and hence the non-trivial correlation of a random subspace with the discriminative features will always hide the differences between small and large margin directions.

Finally, the fluctuations in the values and the fact that the classifier is not completely invariant to $\operatorname{span}\{\bm{u}_1\}^\perp$ might indicate that the network has built a complex boundary. However, similar fluctuations and finite values in $\operatorname{span}\{\bm{u}_1\}^\perp$ would also be expected, even if the model was linear by construction and was perfectly separating the training data\footnote{In Sec.~A of Supp. material we provide a theoretical characterization of this effect for a linear classifier.}.

\subsection{Evidence on real data}
\label{subsec:invariance_analysis_real_data}
In contrast to the synthetic data, where the discriminative features are known by construction, the exact description of the features presented in \emph{real} datasets is usually not known. In order to identify these features and understand their connection to the local construction of the decision boundaries, we apply the proposed framework on standard computer vision datasets, and investigate if deep networks trained on real data also present high invariance along the non-discriminative directions of the dataset.

In our study, we train multiple networks on MNIST~\cite{lecunGradientbasedLearningApplied1998} and CIFAR-10~\cite{krizhevskyLearningMultipleLayers2009}, and for ImageNet~\cite{dengImageNetLargescaleHierarchical2009} we use several of the pretrained networks provided by PyTorch~\cite{paszkePyTorchImperativeStyle}\footnote{Experiments on more CNNs (with similar findings) are presented in Sec.~I of Supp. material.}. Let $W,H,C$ denote the width, height, and number of channels of the images in those datasets, respectively. In our experiments we use the 2-dimensional discrete cosine transform (2D-DCT)~\cite{ahmed_dct_1974} basis of size $H\times W$ to generate the observation subspaces. In particular, let $\bm{\mathcal{D}}\in\R^{H\times W\times H\times W}$ denote the 2D-DCT generating tensor, such that $\operatorname{vec}(\bm{\mathcal{D}}(i,j,:,:)\otimes \bm{I}_C)$ represents one basis element of the image space. We generate the subspaces by sampling $K\times K$ blocks from the diagonal of the DCT tensor using a sliding window with step-size $T$:
$\mathcal{S}_j=\operatorname{span}\{\operatorname{vec}\left(\bm{\mathcal{D}}\left(j\cdot T+k, j\cdot T+k, :,:\right)\otimes \bm{I}_C\right)\;k=0,\dots,K-1\}.$

The sliding window on the diagonal of the DCT gives a good trade-off between visualization abilities in simple one-dimensional plots, and a diverse sampling of the spatial spectrum of natural images, with a well-defined gradient flowing from low to high frequencies\footnote{See Sec.~I of Supp. material for a similar analysis including off-diagonal subspaces.}. The DCT has a long application tradition in image processing due to its good approximation of the decorrelating transform (KLT)~\cite{gonzalezDigitalImageProcessing2017}. Furthermore, in previous studies on the robustness of deep networks to different frequencies, the DCT was also the basis of choice~\cite{sharmaEffectivenessLowFrequency2019} because it avoids dealing with complex subspaces. 

We observe in practice that the DCT basis is also quite aligned to the features of these datasets, and hence it can give precise information about the discriminative features exploited by the networks. A more aligned basis with respect to the discriminative features would probably show a sharper transition between low and high margins. However, finding such network-agnostic bases is a challenging task without knowing the features \emph{a priori}. The DCT is not perfectly feature-aligned, but it seems to be a good choice for comparing different architectures, especially if we compare its results to those obtained using a random orthonormal basis where differences in margin cannot be identified (c.f. Sec.~N in Supp. material).

\begin{figure}[t]
\subfigure[MNIST (Test: $99.4\%$)]
{\includegraphics[width=0.32\textwidth]{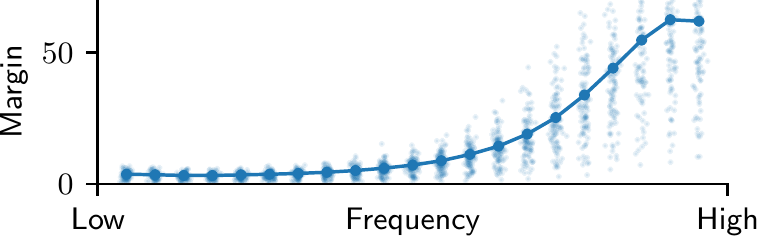}
\label{subfig:invariance_mnist}}
\subfigure[CIFAR-10 (Test: $93.0\%$)]
{\includegraphics[width=0.32\textwidth]{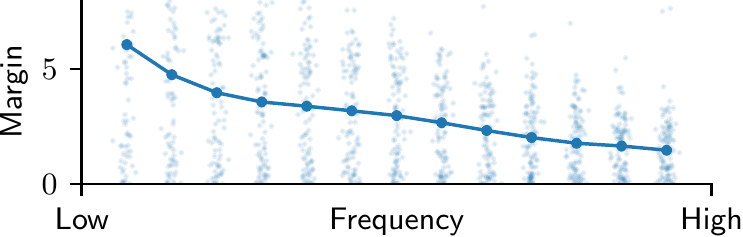}
\label{subfig:invariance_cifar}}
\subfigure[ImageNet (Test: $76.2\%$)]
{\includegraphics[width=0.32\textwidth]{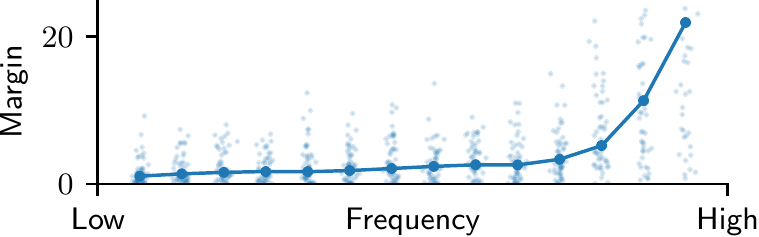}
\label{subfig:invariance_imagenet}}

\subfigure[MNIST flipped (Test: $99.3\%$)]
{\includegraphics[width=0.32\textwidth]{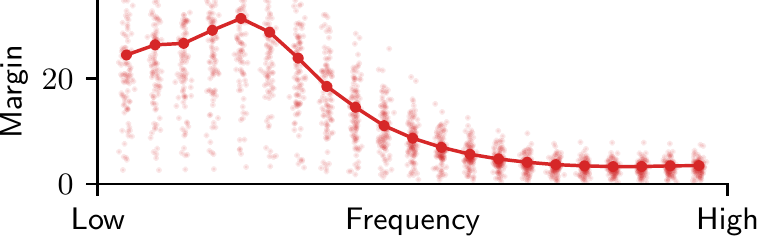}
\label{subfig:invariance_mnist_flip}}
\subfigure[CIFAR-10 flipped (Test: $91.2\%$)]
{\includegraphics[width=0.32\textwidth]{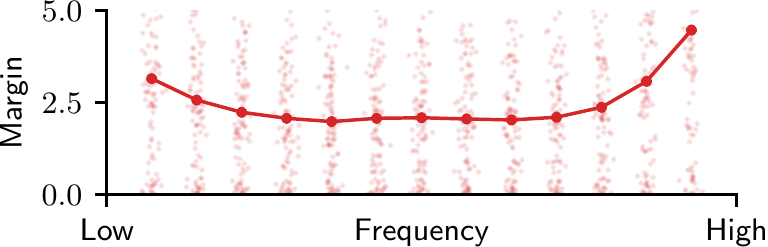}
\label{subfig:invariance_cifar_flip}}
\subfigure[ImageNet flipped (Test: $68.1\%$)]
{\includegraphics[width=0.32\textwidth]{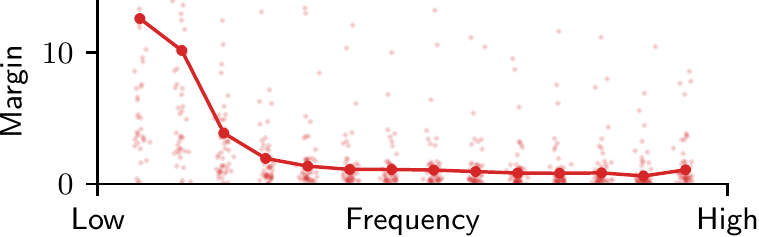}
\label{subfig:invariance_imagenet_flip}}
\caption{Margin distribution of test samples in subspaces taken from the diagonal of the DCT (low to high frequencies). The thick line indicates the median values of the margin, and the shaded points represent its distribution. \textbf{Top}: (a) MNIST (LeNet)~\protect\cite{lecunGradientbasedLearningApplied1998}, (b) CIFAR-10 (DenseNet-121)~\protect\cite{huangDenselyConnectedConvolutional2017} and (c) ImageNet (ResNet-50)~\protect\cite{resnet} \textbf{Bottom}: (d) MNIST (LeNet), (e) CIFAR-10 (DenseNet-121) and (f) ImageNet (ResNet-50) trained on frequency ``flipped'' versions of the standard datasets.}
\label{fig:invariance_real}
\end{figure}

The margin distribution of the evaluated test samples is presented in the top of Fig.~\ref{fig:invariance_real}. For MNIST and ImageNet, the networks present a strong invariance along high frequency directions and small margin along low frequency ones. We will later show that this is related to the fact that these networks mainly exploit discriminative features in the low frequencies of these datasets. Notice, however, that for CIFAR-10 dataset the margin values are more uniformly distributed; an indication that the network exploits discriminative features across the full spectrum as opposed to the human vision system~\cite{campbell_robson_1968}.

\subsubsection{Adaptation to data representation}
\label{subsubsec:flipping}
Towards verifying that the proposed framework can capture the relation between the data features and the local construction of the decision boundaries, we must first ensure that the direction of the observed invariance (large margin) is related to the features presented in the dataset, rather than being just an effect of the network itself.

\begin{wrapfigure}[12]{r}{0.5\textwidth}
\vspace{-0.5cm}
\begin{center}
\includegraphics[width=0.5\textwidth]{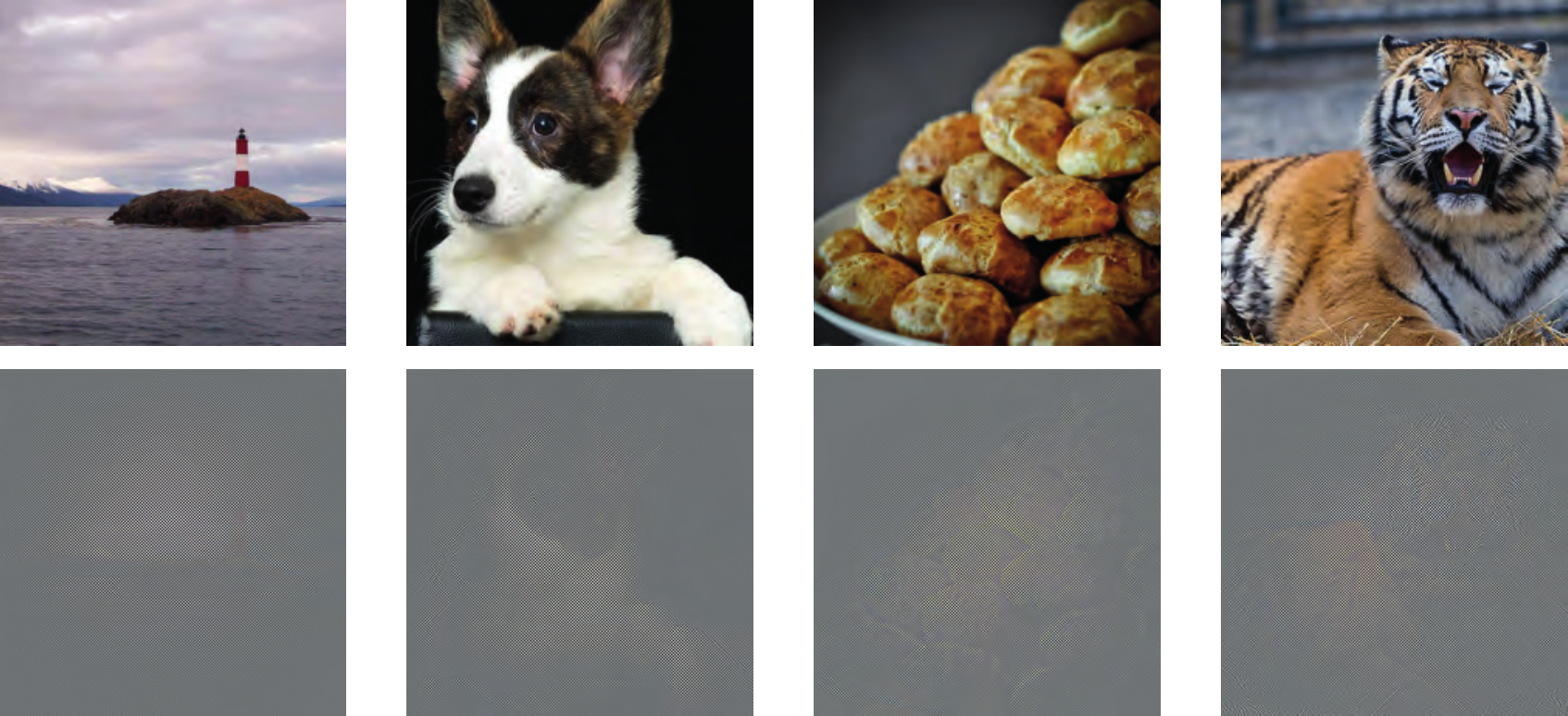}
\caption{``Flipped'' image examples from ImageNet. \textbf{Top}: original. \textbf{Bottom}: ``flipped''.}
\label{fig:flipped_examples}
\end{center}
\end{wrapfigure}

Based on our observation that the margin tends to be small in low frequency directions and large in high frequency ones, we choose to carefully tweak the representation of the data, such that the low frequencies are swapped with the high frequencies. In practice, if $\mathfrak{D}$ denotes the forward DCT transform operator, the new image representation $\bm{x}'$ is expressed as $\bm{x}'=\mathfrak{D}^{-1}(\operatorname{flip}(\mathfrak{D}(\bm{x})))$, where $\operatorname{flip}$ corresponds to one horizontal and one vertical flip of the DCT transformed image (see Fig.~\ref{fig:flipped_examples}). Thus, if the direction of the resulting margin is strongly related to the data features, the constructed decision boundaries should also adapt to this new data representation, and the margin along the invariant directions (high frequencies) should swap with the margin of the discriminative ones (low frequencies). Informally speaking, the margin distribution should ``flip''.

We apply our framework on multiple networks trained on the ``flipped'' datasets, and the margin distribution is depicted at the bottom of Fig.~\ref{fig:invariance_real}. For both MNIST and ImageNet, the directions of the decision boundaries indeed \emph{follow} the new data representation -- although they are not an exact mirroring of the original representation. This indicates that the margin strongly depends on the data distribution, and it is not solely an effect of the network architecture. Note again that for CIFAR-10 the effect is not as obvious, due to the quite uniform distribution of the margin.

\subsubsection{Invariance and elasticity}
\label{subsubsec:non_discriminative_directions}
The second property we need to verify is that the small margins reported in Fig.~\ref{fig:invariance_real} do indeed correspond to directions containing discriminative features in the training set. For doing so, we use the insights of Fig.~\ref{subfig:invariance_cifar} on CIFAR-10 -- where, opposed to the other datasets, we assume that there are exploited discriminative features in the whole spectrum -- and show that, by explicitly modifying its features, we can induce a high margin response in the measured curve in a set of selected directions. 

\begin{wrapfigure}[12]{r}{0.5\textwidth}
\vspace{-0.4cm}
\begin{center}
\includegraphics[width=0.5\textwidth]{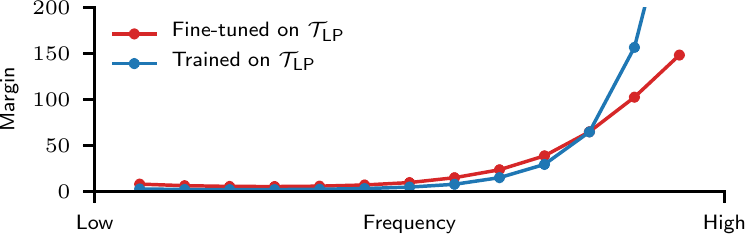}
\caption{Median margin of test samples from CIFAR-10 for a DenseNet-121 (i) trained on CIFAR-10 and fine-tuned on $\mathcal{T}_{\text{LP}}$ (test: $90.79\%$), and (ii) trained on $\mathcal{T}_{\text{LP}}$ from scratch (test: $89.67\%$).}
\label{fig:invariance_LP_cifar}
\end{center}
\end{wrapfigure}
In particular, we create a low-pass filtered version of CIFAR-10 ($\mathcal{T}_\text{LP}$), where we retain only the frequency components in a $16\times 16$ square at the top left of the diagonal of the DCT-transformed images. This way we ensure that no training image has any energy, hence information, outside of this frequency subspace. The median margin\footnote{We do not plot the full distribution to avoid clutter. The $5$-perc. of the margin in the last subspace is $5.05$.} of CIFAR-10 test samples for a network trained on $\mathcal{T}_{\text{LP}}$ is illustrated in Fig.~\ref{fig:invariance_LP_cifar}. Indeed, by eliminating the high frequency content, we have forced the network to become invariant along these directions. This clearly demonstrates that there existed discriminative features in the high frequency spectrum of CIFAR-10, and that by effectively removing these from all the samples, the inductive bias of training pushes the network to become invariant to them.

Moreover, this effect can \emph{also} be triggered during training. To show this, we start with the CIFAR-10 trained network studied in Fig.~\ref{subfig:invariance_cifar} and continue training it for a few more epochs with a small learning rate using only $\mathcal{T}_{\text{LP}}$. Fig.~\ref{fig:invariance_LP_cifar} shows the new median margins of this network. The fine-tuned network is again invariant to the high frequencies.

The elasticity to the modification of features during training gives a new perspective to the theory of catastrophic forgetting~\cite{mccloskeyCatastrophicInterferenceConnectionist1989}, as it confirms that the decision boundaries of a neural network can only exist for as long as the classifier is trained with the features that hold them together. In Sec.~D of Supp. material, we provide an additional experiment to further discuss this relation in which we add and remove points from a dataset, thus triggering an elastic reaction in the network.

Finally, note that by training with only low frequency data, the test accuracy of the network on the original CIFAR-10 only drops around $3\%$\footnote{A similar effect was shown on ImageNet~\cite{yinFourierPerspectiveModel2019}, although the network was only tested on filtered data. For MNIST, training on a low-pass version (with bandwidth $14\times 14$) yields no drop in test accuracy. This makes sense, as the original MNIST trained networks are mostly exploiting low frequencies and already have high margins in the high frequencies. There might exist discriminative information in the high frequencies, but the network does not exploit it (see Sec.~C in the Supp. material).}. Because $\mathcal{T}_\text{LP}$ has no high frequency energy, a network trained on it will uniformly extend its boundaries in this part of the spectrum and no high frequency perturbation will be able to flip the network's output. In contrast, testing $\mathcal{T}_\text{LP}$ data on a CIFAR-10 trained network only achieves $27.45\%$ test accuracy. This is because networks trained on CIFAR-10 do have boundaries in the high frequencies, and hence showing them original samples perturbed in this frequency range (i.e., $\mathcal{T}_\text{LP}$) can greatly change their decisions.

\subsection{Discussion}

The main claim in this section is that deep neural networks \emph{only} create decision boundaries in regions where they identify discriminative features in the training data. As a result, there is a big relative difference in the large margin along the invariant directions, and the smaller margin in the discriminative directions.

The main difficulty for establishing causation in this idea is the fact that the discriminative features of real datasets are not known. Hence, determining their role on the geometry of a trained neural network can only be done by artificially manipulating the data. In particular, there are two main confounding factors that might alternatively explain our results: the network architecture or the training algorithm. However, the experiments in Sec.~\ref{subsec:invariance_analysis_real_data} are precisely designed to rule out their influence in this phenomenon.

Specifically, in the flipping experiments, flipping the data -- \emph{ceteris paribus} -- also flips the margin distribution, thus demonstrating that the margins are necessarily caused by the information present in the data. The other interventions we do on the samples (e.g., low-pass experiments) confirm that, in the absence of information in a certain discriminative subspace, the network becomes invariant along this discriminative subspace. Therefore, we believe that there is indeed a causal connection between the features of the data and the measured margins in these neural networks. In fact, parallel theoretical studies have demonstrated that the ability of neural networks to distinguish between discriminative and non-discriminative noise subspaces in a dataset is one of the main advantages of deep learning over kernel methods~\cite{ghorbani2020neural}.

\section{Sensitivity to position of training samples}

\begin{figure}[t]
\begin{minipage}[t]{.49\textwidth}
\includegraphics[width=\columnwidth]{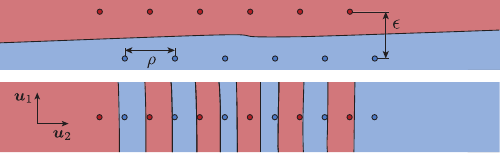}
\caption{Cross-section of an MLP trained on $\mathcal{T}_2(\rho=20, \epsilon, \sigma=1, K=3)$ with $\epsilon=1$ (\textbf{top}) and $\epsilon=0$ (\textbf{bottom}). Axes scaled differently.}
\label{fig:cross_sections}
\end{minipage}
\hfill
\begin{minipage}[t]{.49\textwidth}
\includegraphics[width=\columnwidth]{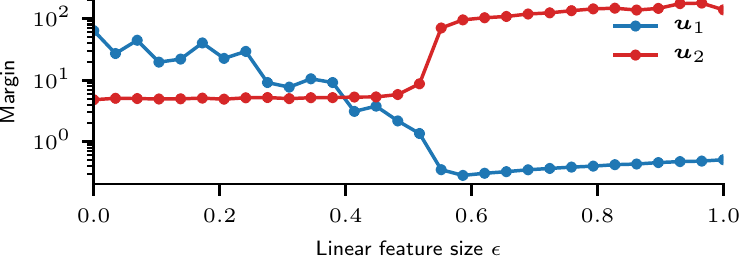}
\caption{Median margin values along $\bm{u}_1$ and $\bm{u}_2$ for MLPs (test: $100\%$ always) trained on $\mathcal{T}_2$ for different values of $\epsilon$ and $\rho=20$.}
\label{fig:unstable_boundary}
\end{minipage}
\end{figure}

Our novel framework to relate boundary geometry and data features can help track the dynamics of learning. In this section, we use it to explain how training with a slightly perturbed version of the training samples can greatly alter the network geometry. We further analyze how adversarial training can be so successful in removing features with small margin to increase the network's robustness.

\subsection{Evidence on synthetic data}
\label{subsec:instability_synthetic}

We train multiple times an MLP with the same setup as in Section~\ref{subsec:invariance_synthetic}, but this time using slightly perturbed versions of the same synthetic dataset. In particular, we use a family of training sets $\mathcal{T}_2(\rho, \epsilon, \sigma, K)$ consisting in $N=10,000$ independent $D=100$-dimensional samples $\bm{x}^{(i)}=\bm{U}(\bm{x}_1^{(i)}\oplus\bm{x}_2^{(i)}\oplus\bm{x}_3^{(i)})$ such that $\bm{x}_1^{(i)}=\epsilon y^{(i)}$; $\bm{x}_2^{(i)}= \rho \cdot k$ when $y^{(i)}=+1$, and $\bm{x}_2^{(i)}=\rho \cdot \left(k + \frac{1}{2}\right)$ when $y^{(i)}=-1$, where $k$ is sampled from a discrete uniform distribution with values $\{-K,\dots,K-1\}$; and $\bm{x}_3^{(i)}\sim\mathcal{N}(0,\sigma^2\bm{I}_{D-2})$ (see Fig.~\ref{fig:cross_sections}). Here, $\epsilon,\rho\geq0$ denote the feature sizes. Again, the multiplication by a random orthonormal matrix $\bm{U}\in\operatorname{SO}(D)$ avoids any possible bias of the network towards the canonical basis. Note that for $\epsilon>0$ this training set will always be linearly separable using $\bm{u}_1$, but without necessarily yielding a maximum margin classifier. Especially when $\rho\gg\epsilon$.

Fig.~\ref{fig:unstable_boundary} shows the median margin values of $M=1,000$ observation samples for an MLP trained on different versions of $\mathcal{T}_2(\rho, \epsilon, \sigma, K)$ with a fixed $\rho=20$, but a varying small $\epsilon$. Based on this plot, it is clear that for very small $\epsilon$ the neural network predominantly uses the information contained in $\bm{u}_2$ to separate the different classes. Indeed, for $\epsilon<0.2$, the network is almost invariant in $\bm{u}_1$, and it uses a non-linear alternating pattern in $\bm{u}_2$ to separate the data\footnote{Note that this particular pattern, can in principle classify any dataset with $\rho=20$, no matter the value of $\epsilon$.} (see bottom row of Fig.~\ref{fig:cross_sections}). On the contrary, at $\epsilon>0.5$ we notice a sharp transition in which we see that the neural network suddenly changes its behaviour and starts to linearly separate the different points using only $\bm{u}_1$ (see top row of Fig.~\ref{fig:cross_sections}).

We conjecture that this phenomenon is rooted on the strong inductive bias of the learning algorithm to build connected decision regions whenever geometrically and topologically possible, as empirically validated in~\cite{fawziEmpiricalStudyTopology2018b}. Here, we go one step further and hypothesize that the inductive bias of the learning algorithm has a tendency to build classifiers in which every pair of training samples with the same label belongs to the same decision region. If possible, connected by a straight path.

We see Fig.~\ref{fig:unstable_boundary} as a validation of this hypothesis. For small values of $\epsilon$, it is hard for the algorithm to find solutions that connect points from the same class with a straight path, as this is very aligned with $\bm{u}_2$. However, there is a precise moment (i.e., $\epsilon=0.5$) in which finding such a solution becomes much easier, and then the algorithm suddenly starts to converge to the linearly separating solution.

At this stage it is important to highlight that repeating the same experiment with a different random seed, or for a fixed initialization, does not affect the results. Furthermore, overfitting cannot be the cause of these results, as the MLP always achieves $100\%$ test accuracy for $\epsilon < 0.5$, as well. Finally, adding a small weight decay (i.e., $10^{-3}$) does not help the network find the linearly separable solution for $\epsilon < 0.5$; it rather hinders its convergence (i.e., final train accuracy is $50\%$).

It remains unclear whether this inductive bias is the only mechanism that can trigger a sharp transition in the type of learned decision boundaries, or if there are other types of biases that can cause the same effect. In any case, we believe that the significant difference in the type of function that the algorithm learns when trained with very similar training samples (see  Fig.~\ref{fig:cross_sections}), is an unambiguous confirmation of the sensitivity of deep learning to the exact position of its training input.

Concurrent work~\cite{shah2020pitfalls} has also used a similarly constructed dataset to $\mathcal{T}_2(\rho, \epsilon, \sigma, K)$ to argue that the simplicity bias of a neural network when trained using standard procedures might be responsible for the selection of non-robust features in the dataset~\cite{ilyasAdversarialExamplesAre2019}.

\subsection{Connections to adversarial training}
\label{subsubsec:instability_real}

\begin{figure*}[t]
\begin{center}
\subfigure[MNIST ($\text{Test: }98\%$)]
{\includegraphics[width=0.32\textwidth]{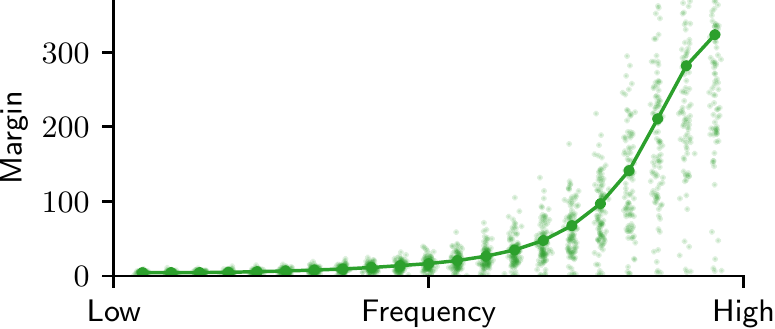}
\label{subfig:invariance_robust_mnist}}
\subfigure[CIFAR-10 ($\text{Test: }83\%$)]
{\includegraphics[width=0.32\textwidth]{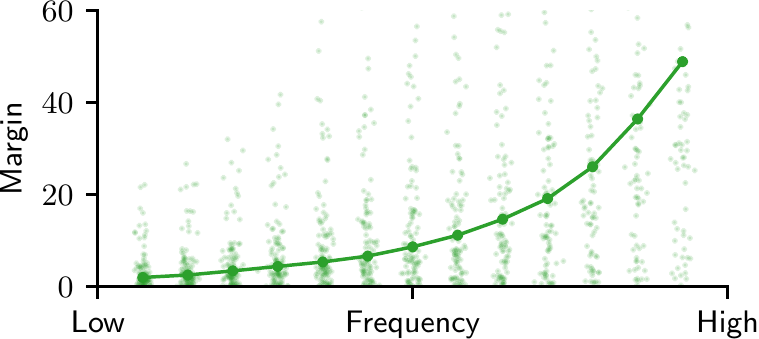}
\label{subfig:invariance_robust_cifar}}
\subfigure[ImageNet ($\text{Test: }76\%$)]
{\includegraphics[width=0.32\textwidth]{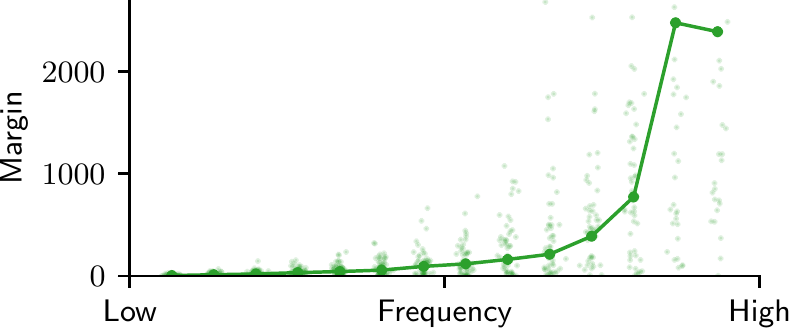}
\label{subfig:invariance_robust_imagenet}}
\caption{Margin distribution of test samples in subspaces taken from the diagonal of the DCT (low to high frequencies). Adversarially trained networks using $\ell_2$ PGD~\protect\cite{madryDeepLearningModels2018} (a) LeNet ($\text{Adv: }76\%$), (b) DenseNet-121 ($\text{Adv: }55\%$) and (c) ResNet-50 ($\text{Adv: }35\%$).}
\label{fig:invariance_real_robust}
\end{center}
\end{figure*}

Finally, we show that adversarial training exploits the type of phenomena described in Sec.~\ref{subsec:instability_synthetic} to reshape the boundaries of a neural network. In this regard, Fig.~\ref{fig:invariance_real_robust} shows the margin distribution across the DCT spectrum of a few adversarially trained networks\footnote{The analogous effect for the ``flipped'' datasets (cf. Section~\ref{subsubsec:flipping}) is detailed in Sec.~M of Supp. material.}. As expected, the margins of the adversarially trained networks are significantly higher than those in Fig.~\ref{fig:invariance_real}. 

Surprisingly, though, the largest increase can be noticed in the high frequencies for all datasets. Considering that adversarial training only differs from standard training in that it slightly moves the training samples, it is imperative that deep networks converge to very different solutions under such small modifications. The next experiments on CIFAR-10 shed light on the dynamics of this process.

\paragraph{Adversarial perturbations can trigger invariance in orthogonal directions} Slightly perturbing the training samples can remove features in an unpredictable manner. Fig.~\ref{fig:spectral_cifar_densenet} shows the spectral decomposition of the adversarial perturbations crafted during adversarial training of CIFAR-10. The energy of the perturbations during training is always concentrated in the low frequencies, and has hardly any high frequency content. However, the greatest effect on margin is seen on the orthogonal high frequency directions (see Fig.~\ref{fig:invariance_real_robust}). This is similar to what is seen in Fig.~\ref{fig:cross_sections}, where slightly perturbing the training samples along $\bm{u}_1$ drastically affects the margin along $\bm{u}_2$.

\begin{figure}[t]
\begin{minipage}[t]{.49\textwidth}
\includegraphics[width=\columnwidth]{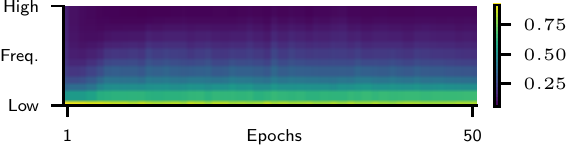}
\caption{Energy of adversarial perturbations on subspaces of the DCT during adv. training of CIFAR-10 (DenseNet-121). Plot shows 95-perc.}\label{fig:spectral_cifar_densenet}
\end{minipage}
\hfill
\begin{minipage}[t]{.49\textwidth}
\includegraphics[width=\columnwidth]{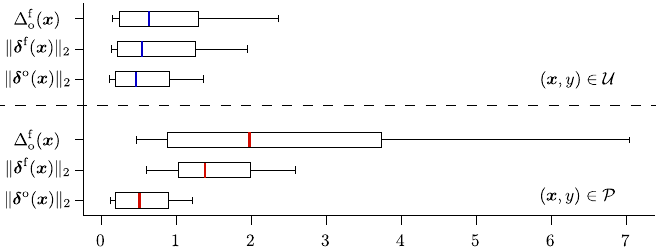}
\caption{Margin distribution in different directions of a ResNet-18 trained on CIFAR-10 and fine-tuned on 100 DeepFool examples.}
\label{fig:fine_tuning_cifar}
\end{minipage}
\end{figure}

Overall, we see that adversarial training exploits the sensitivity of the network to small changes in the training samples to hide some discriminative features from the model. This is especially clear when we compare the CIFAR-10 values in Fig.~\ref{subfig:invariance_robust_cifar} and Fig.~\ref{subfig:invariance_cifar}, where it becomes evident that some previously used discriminative features in the high frequencies are completely overlooked by the adversarially trained network. In the following example, we show that, in practice, it is not actually necessary to change the position of all training points to induce a large invariance reaction.

\paragraph{Invariance can be triggered by just a few samples} Modifying the position of just a \emph{minimal} number of training samples is enough to locally introduce excessive invariance on a classifier. To demonstrate this, we take a ResNet-18 (test: $90\%$) trained on CIFAR-10, and randomly select a set of $100$ training samples $\mathcal{P}\subset\mathcal{T}$. We fine-tune this classifier replacing those $100$ samples with $(\bm{x} + \bm{\delta}^\mathrm{o}(\bm{x}), y)$ in $\mathcal{P}$ (test: $90\%$), where $\bm{\delta}^\mathrm{o}$ and $\bm{\delta}^\mathrm{f}$ represent the adversarial perturbations for the original and fine-tuned network, respectively.

Fig.~\ref{fig:fine_tuning_cifar} shows the magnitude of these perturbations both for the $100$ adversarially perturbed points $\mathcal{P}\subset \mathcal{T}$ and for a subset of $1,000$ unmodified samples $\mathcal{U}\subset \mathcal{T}$. Here, we can clearly see that, after fine-tuning, the boundaries around $\mathcal{P}$ have been completely modified, showing a large increase in the distance to the boundary in the direction of the original adversarial perturbation $\Delta_\mathrm{o}^\mathrm{f}(\bm{x})$ for $(\bm{x}, y)\in\mathcal{P}$. Meanwhile, the boundaries around $\mathcal{U}$ have not seen such a dramatic change. 

This means that modifying the position of \emph{only} a small fraction of the training samples can induce a large change in the shape of the boundary. Note that this dependency on a few samples resembles the one of support vector machines~\cite{cortes_vapnik_1995}, whose decision boundaries are defined by the position of a few supporting vectors. However, in contrast to SVMs, deep neural networks are not guaranteed to maximize margin in the input space (see Fig.~\ref{fig:unstable_boundary}), and the points that support their boundaries need not be the ones closest to them, hence rendering their identification much harder.

\section{Concluding remarks}

In this paper, we proposed a new framework that permits to relate data features and margin along specific directions. We also explained how the inductive bias of the learning algorithm shapes the decision boundaries of neural networks by creating boundaries that are invariant to non-discriminative directions. We further showed that these boundaries are very sensitive to the exact position of the training samples, and that this enables adversarial training to build more robust classifiers. 

\paragraph{Future directions} We believe that our new framework can be used in future research to investigate the connections between training features and the macroscopic geometry of deep models. This can serve as a tool to obtain new insights on the intriguing properties of deep networks such as their catastrophic forgetting~\cite{mccloskeyCatastrophicInterferenceConnectionist1989}. On the practical side, there are some important applications that could benefit from our findings. In terms of robustness, identifying the small subspace of discriminative features of a network can lead to faster black box-attacks by restricting the search space of the perturbations. In fact our analysis explains why in recent attacks~\cite{tsuzukuStructuralSensitivityDeep2019, sharmaEffectivenessLowFrequency2019, Rahmati2020geoDA} using low-frequency perturbations improves the query efficiency. Simultaneously, the dependency of boundaries to just a few training samples can be exploited to design faster adversarial training schemes, and is a clear avenue for future research in active learning~\cite{active}. Finally, having a better understanding about the mechanisms that lead to excessive invariance~\cite{tramerInvarianceTradeOff2020} after adversarial training could help boost the standard accuracy of robust models.

\section*{Broader impact}
In this work, we build on the mechanisms of adversarial machine learning and propose a new framework that connects the microscopic features of a dataset (i.e.,~position of the training samples in the input space) to the macroscopic properties of the learned models (e.g.,~distance to the decision boundary). Our methodology sheds light onto the inductive bias that deep classifiers exploit for shaping their decision boundaries and might explain the successes and limitations of deep learning.

Part of our work continues a recent line of research that shows that the way neural networks perceive the image spectrum is very different to the way humans do. In fact, based on the margin distributions for different frequencies that we measure, we can see that neural networks can sometimes use features in the higher end of the spectrum which are invisible to the human eye (see Fig.~\ref{fig:invariance_real} and \cite{ilyasAdversarialExamplesAre2019, yinFourierPerspectiveModel2019}). A positive application of our work would therefore be the use of this knowledge and some methods derived from our experimental framework to better align the behavior of neural networks to the human visual system perception. This could have positive implications in the interpretability of neural networks when deployed on some domains where it is necessary to explain the decisions of a classifier, e.g., medical imaging.

We see the main possible negative implication of our work in the malicious exploitation of the discriminative features of the datasets for generating more advanced and efficient adversarial attacks. When deploying deep models into the real world, especially for safety-critical applications, it is of high importance that practitioners are aware of the low margin blindspots in the classifiers and make the best to protect them.

\section*{Acknowledgments}
We thank Maksym Andriushchenko, and Evangelos Alexiou for their fruitful discussions and feedback. This work has been partially supported by the CHIST-ERA program under Swiss NSF Grant 20CH21\_180444, and partially by Google via a Postdoctoral Fellowship and a GCP Research Credit Award.

\bibliographystyle{ieeetr}
\bibliography{main.bib}

\begin{thebibliography}{10}

\bibitem{szegedyIntriguingPropertiesNeural2014}
C.~Szegedy, W.~Zaremba, I.~Sutskever, J.~Bruna, D.~Erhan, I.~J. Goodfellow, and
  R.~Fergus, ``Intriguing properties of neural networks,'' in {\em
  International {{Conference}} on {{Learning Representations}} ({{ICLR}})},
  2014.

\bibitem{goodfellowExplainingHarnessingAdversarial2015}
I.~J. Goodfellow, J.~Shlens, and C.~Szegedy, ``Explaining and harnessing
  adversarial examples,'' in {\em International {{Conference}} on {{Learning
  Representations}} ({{ICLR}})}, 2015.

\bibitem{jetleyFriendsTheseWho2018}
S.~Jetley, N.~A. Lord, and P.~H.~S. Torr, ``With {{Friends Like These}}, {{Who
  Needs Adversaries}}?,'' in {\em Advances in {{Neural Information Processing
  Systems}} ({{NeurIPS}})}, pp.~10749--10759, 2018.

\bibitem{ilyasAdversarialExamplesAre2019}
A.~Ilyas, S.~Santurkar, D.~Tsipras, L.~Engstrom, B.~Tran, and A.~Madry,
  ``Adversarial {{Examples Are Not Bugs}}, {{They Are Features}},'' in {\em
  Advances in {{Neural Information Processing Systems}} ({{NeurIPS}})},
  pp.~125--136, 2019.

\bibitem{madryDeepLearningModels2018}
A.~Madry, A.~Makelov, L.~Schmidt, D.~Tsipras, and A.~Vladu, ``Towards {{Deep
  Learning Models Resistant}} to {{Adversarial Attacks}},'' in {\em
  International {{Conference}} on {{Learning Representations}} ({{ICLR}})},
  2018.

\bibitem{moosavi-dezfooliRobustnessCurvatureRegularization2019a}
S.-M. {Moosavi-Dezfooli}, A.~Fawzi, J.~Uesato, and P.~Frossard, ``Robustness
  via {{Curvature Regularization}}, and {{Vice Versa}},'' in {\em {{IEEE
  Conference}} on {{Computer Vision}} and {{Pattern Recognition}} ({{CVPR}})},
  pp.~9070--9078, 2019.

\bibitem{sharmaEffectivenessLowFrequency2019}
Y.~Sharma, G.~W. Ding, and M.~A. Brubaker, ``On the {{Effectiveness}} of {{Low
  Frequency Perturbations}},'' in {\em International {{Joint Conference}} on
  {{Artificial Intelligence}} ({{IJCAI}})}, pp.~3389--3396, 2019.

\bibitem{zhangUnderstandingDeepLearning2017}
C.~Zhang, S.~Bengio, M.~Hardt, B.~Recht, and O.~Vinyals, ``Understanding deep
  learning requires rethinking generalization,'' in {\em International
  {{Conference}} on {{Learning Representations}} ({{ICLR}})}, 2017.

\bibitem{gunasekarCharacterizingImplicitBias2018a}
S.~Gunasekar, J.~Lee, D.~Soudry, and N.~Srebro, ``Characterizing {{Implicit
  Bias}} in {{Terms}} of {{Optimization Geometry}},'' in {\em Proceedings of
  the 35th International {{Conference}} on {{Machine Learning}} ({{ICML}})},
  pp.~1832--1841, 2018.

\bibitem{belkinReconcilingModernMachinelearning2019}
M.~Belkin, D.~Hsu, S.~Ma, and S.~Mandal, ``Reconciling modern machine-learning
  practice and the classical bias\textendash{}variance trade-off,'' {\em
  Proceedings of the National Academy of Sciences}, vol.~116, no.~32,
  pp.~15849--15854, 2019.

\bibitem{soudryImplicitBiasGradient2018}
D.~Soudry, E.~Hoffer, M.~S. Nacson, S.~Gunasekar, and N.~Srebro, ``The
  {{Implicit Bias}} of {{Gradient Descent}} on {{Separable Data}},'' in {\em
  International {{Conference}} on {{Learning Representations}} ({{ICLR}})},
  2018.

\bibitem{liDecisionBoundaryDeep2018}
Y.~Li, L.~Ding, and X.~Gao, ``On the {{Decision Boundary}} of {{Deep Neural
  Networks}},'' {\em arXiv:1808.05385}, Aug. 2018.

\bibitem{santurkarImageSynthesisSingle2019b}
S.~Santurkar, D.~Tsipras, B.~Tran, A.~Ilyas, L.~Engstrom, and A.~Madry, ``Image
  {{Synthesis}} with a {{Single}} ({{Robust}}) {{Classifier}},'' in {\em
  Advances in {{Neural Information Processing Systems}} ({{NeurIPS}})},
  pp.~1260--1271, 2019.

\bibitem{tramerInvarianceTradeOff2020}
F.~Tram{\'e}r, J.~Behrmann, N.~Carlini, N.~Papernot, and J.-H. Jacobsen,
  ``Fundamental tradeoffs between invariance and sensitivity to adversarial
  perturbations,'' in {\em Proceedings of the 37th International Conference on
  Machine Learning (ICML)}, July 2020.

\bibitem{fawziEmpiricalStudyTopology2018b}
A.~Fawzi, S.-M. {Moosavi-Dezfooli}, P.~Frossard, and S.~Soatto, ``Empirical
  {{Study}} of the {{Topology}} and {{Geometry}} of {{Deep Networks}},'' in
  {\em {{IEEE Conference}} on {{Computer Vision}} and {{Pattern Recognition}}
  ({{CVPR}})}, pp.~3762--3770, 2018.

\bibitem{ramamurthyTopologicalDataAnalysis2019}
K.~N. Ramamurthy, K.~R. Varshney, and K.~Mody, ``Topological {{Data Analysis}}
  of {{Decision Boundaries}} with {{Application}} to {{Model Selection}},'' in
  {\em Proceedings of the 36th International {{Conference}} on {{Machine
  Learning}} ({{ICML}})}, pp.~5351--5360, 2019.

\bibitem{heDecisionBoundaryAnalysis2018}
W.~He, B.~Li, and D.~Song, ``Decision {{Boundary Analysis}} of {{Adversarial
  Examples}},'' in {\em International {{Conference}} on {{Learning
  Representations}} ({{ICLR}})}, 2018.

\bibitem{elsayedLargeMarginDeep}
G.~Elsayed, D.~Krishnan, H.~Mobahi, K.~Regan, and S.~Bengio, ``Large {{Margin
  Deep Networks}} for {{Classification}},'' in {\em Advances in {{Neural
  Information Processing Systems}} ({{NeurIPS}})}, pp.~842--852, 2018.

\bibitem{jiangPredictingGeneralizationGaph2019}
Y.~Jiang, D.~Krishnan, H.~Mobahi, and S.~Bengio, ``Predicting the
  {{Generalization Gap}} in {{Deep Networks}} with {{Margin Distributions}},''
  in {\em International {{Conference}} on {{Learning Representations}}
  ({{ICLR}})}, 2019.

\bibitem{yinFourierPerspectiveModel2019}
D.~Yin, R.~G. Lopes, J.~Shlens, E.~D. Cubuk, and J.~Gilmer, ``A {{Fourier
  Perspective}} on {{Model Robustness}} in {{Computer Vision}},'' in {\em
  Advances in {{Neural Information Processing Systems}} ({{NeurIPS}})},
  pp.~13255--13265, 2019.

\bibitem{tsuzukuStructuralSensitivityDeep2019}
Y.~Tsuzuku and I.~Sato, ``On the {{Structural Sensitivity}} of {{Deep
  Convolutional Networks}} to the {{Directions}} of {{Fourier Basis
  Functions}},'' in {\em {{IEEE Conference}} on {{Computer Vision}} and
  {{Pattern Recognition}} ({{CVPR}})}, pp.~51--60, 2019.

\bibitem{moosavi-dezfooliDeepFoolSimpleAccurate2016}
S.-M. {Moosavi-Dezfooli}, A.~Fawzi, and P.~Frossard, ``{{DeepFool}}: {{A
  Simple}} and {{Accurate Method}} to {{Fool Deep Neural Networks}},'' in {\em
  {{IEEE Conference}} on {{Computer Vision}} and {{Pattern Recognition}}
  ({{CVPR}})}, pp.~2574--2582, 2016.

\bibitem{Carlini_Wagner}
N.~{Carlini} and D.~{Wagner}, ``Towards evaluating the robustness of neural
  networks,'' in {\em IEEE Symposium on Security and Privacy (SP)}, pp.~39--57,
  2017.

\bibitem{fawziRobustnessDeepNetworks2017}
A.~Fawzi, S.-M. {Moosavi-Dezfooli}, and P.~Frossard, ``The {{Robustness}} of
  {{Deep Networks}}: {{A Geometrical Perspective}},'' {\em IEEE Signal
  Processing Magazine}, vol.~34, no.~6, pp.~50--62, 2017.

\bibitem{lecunGradientbasedLearningApplied1998}
Y.~Lecun, L.~Bottou, Y.~Bengio, and P.~Haffner, ``Gradient-based learning
  applied to document recognition,'' {\em Proceedings of the IEEE}, vol.~86,
  no.~11, pp.~2278--2324, 1998.

\bibitem{krizhevskyLearningMultipleLayers2009}
A.~Krizhevsky, ``Learning {{Multiple Layers}} of {{Features}} from {{Tiny
  Images}},'' tech. rep., University of Toronto, 2009.

\bibitem{dengImageNetLargescaleHierarchical2009}
J.~Deng, W.~Dong, R.~Socher, L.~J. Li, L.~Kai, and F.~F. Li, ``{{ImageNet}}:
  {{A}} large-scale hierarchical image database,'' in {\em {{IEEE Conference}}
  on {{Computer Vision}} and {{Pattern Recognition}} ({{CVPR}})}, pp.~248--255,
  2009.

\bibitem{paszkePyTorchImperativeStyle}
A.~Paszke, S.~Gross, F.~Massa, A.~Lerer, J.~Bradbury, G.~Chanan, T.~Killeen,
  Z.~Lin, N.~Gimelshein, L.~Antiga, A.~Desmaison, A.~Kopf, E.~Yang, Z.~DeVito,
  M.~Raison, A.~Tejani, S.~Chilamkurthy, B.~Steiner, L.~Fang, J.~Bai, and
  S.~Chintala, ``{{PyTorch}}: {{An Imperative Style}}, {{High}}-{{Performance
  Deep Learning Library}},'' in {\em Advances in {{Neural Information
  Processing Systems}} ({{NeurIPS}})}, pp.~8024--8035, 2019.

\bibitem{ahmed_dct_1974}
N.~{Ahmed}, T.~{Natarajan}, and K.~R. {Rao}, ``{Discrete Cosine Transform},''
  {\em IEEE Transactions on Computers}, vol.~C-23, no.~1, pp.~90--93, 1974.

\bibitem{gonzalezDigitalImageProcessing2017}
R.~C. Gonzalez and R.~E. Woods, {\em Digital {{Image Processing}}}.
\newblock {Pearson}, 4 edition~ed., 2017.

\bibitem{huangDenselyConnectedConvolutional2017}
G.~Huang, Z.~Liu, L.~Van Der~Maaten, and K.~Q. Weinberger, ``Densely connected
  convolutional networks,'' in {\em {{IEEE Conference}} on {{Computer Vision}}
  and {{Pattern Recognition}} ({{CVPR}})}, pp.~2261--2269, 2017.

\bibitem{resnet}
K.~{He}, X.~{Zhang}, S.~{Ren}, and J.~{Sun}, ``Deep residual learning for image
  recognition,'' in {\em {{IEEE Conference}} on {{Computer Vision}} and
  {{Pattern Recognition}} ({{CVPR}})}, pp.~770--778, 2016.

\bibitem{campbell_robson_1968}
F.~W. Campbell and J.~G. Robson, ``Application of fourier analysis to the
  visibility of gratings,'' {\em The Journal of Physiology}, vol.~197, no.~3,
  pp.~551--566, 1968.

\bibitem{mccloskeyCatastrophicInterferenceConnectionist1989}
M.~McCloskey and N.~J. Cohen, ``Catastrophic {{Interference}} in
  {{Connectionist Networks}}: {{The Sequential Learning Problem}},'' {\em
  Psychology of Learning and Motivation}, vol.~24, pp.~109--165, 1989.

\bibitem{ghorbani2020neural}
B.~Ghorbani, S.~Mei, T.~Misiakiewicz, and A.~Montanari, ``When do neural
  networks outperform kernel methods{?},'' in {\em Advances in {{Neural
  Information Processing Systems}} (NeurIPS)}, Dec. 2020.

\bibitem{shah2020pitfalls}
H.~Shah, K.~Tamuly, A.~Raghunathan, P.~Jain, and P.~Netrapalli, ``The pitfalls
  of simplicity bias in neural networks,'' in {\em Advances in {{Neural
  Information Processing Systems}} ({{NeurIPS}})}, 2020.

\bibitem{cortes_vapnik_1995}
C.~Cortes and V.~Vapnik, ``Support-vector networks,'' {\em Machine Learning},
  vol.~20, no.~3, pp.~273--297, 1995.

\bibitem{Rahmati2020geoDA}
A.~Rahmati, S.-M. {Moosavi-Dezfooli}, P.~Frossard, and H.~Dai, ``{GeoDA: a
  geometric framework for black-box adversarial attacks},'' in {\em {{IEEE
  Conference}} on {{Computer Vision}} and {{Pattern Recognition}} ({{CVPR}})},
  June 2020.

\bibitem{active}
L.~E. Atlas, D.~A. Cohn, and R.~E. Ladner, ``{Training Connectionist Networks
  with Queries and Selective Sampling},'' in {\em Advances in Neural
  Information Processing Systems (NeurIPS)}, pp.~566--573, 1990.

\bibitem{nagarajanUniformConvergenceMay2019}
V.~Nagarajan and J.~Z. Kolter, ``Uniform convergence may be unable to explain
  generalization in deep learning,'' in {\em Advances in {{Neural Information
  Processing Systems}} ({{NeurIPS}})}, pp.~11611--11622, 2019.

\bibitem{robustness}
L.~Engstrom, A.~Ilyas, S.~Santurkar, and D.~Tsipras, ``Robustness (python
  library),'' 2019.

\bibitem{dykstra}
J.~P. Boyle and R.~L. Dykstra, ``A method for finding projections onto the
  intersection of convex sets in hilbert spaces,'' in {\em Advances in Order
  Restricted Statistical Inference}, pp.~28--47, 1986.

\bibitem{fawziRobustnessClassifiersAdversarial2016}
A.~Fawzi, S.~M. {Moosavi-Dezfooli}, and P.~Frossard, ``Robustness of
  classifiers: From adversarial to random noise,'' in {\em Advances in {{Neural
  Information Processing Systems}} ({{NeurIPS}})}, pp.~1632--1640, 2016.

\end{thebibliography}

\newpage
\renewcommand\thefigure{S\arabic{figure}} 
\renewcommand\thetable{S\arabic{table}} 
\appendix

\section{Theoretical margin distribution of a linear classifier}

In this section we prove that even for linear classifiers trained on $\mathcal{T}_1(\epsilon, \rho, N)$ the distribution of margins along non-discriminative directions will never be infinite, and that it will have a large variance (c.f. Section~\ref{subsec:invariance_synthetic}). This effect is due to the finiteness of the training set which boosts the influence of the non-discriminative directions in the final solution of the optimization. In particular, we show this for the linear classifier introduced in~\cite{nagarajanUniformConvergenceMay2019} and prove the following proposition:

\begin{proposition*}
    Let $f(\bm{x})=\bm{w}^T \bm{x}$ be a linear classifier trained on $\mathcal{T}_1(\epsilon, \sigma, N)$ using one-step gradient descent initialized with $\bm{w}=0$ and $\alpha=1$ to maximize $f(\bm{x}^{(i)}) y^{(i)}$ for every sample, and let $\xi^2(\bm{x})$ denote the ratio between the margin in the direction of the discriminative feature $\operatorname{span}\{\bm{u}_1\}$ and the margin in an orthogonal random subspace $\mathcal{S}_{\text{orth}}\subseteq\operatorname{span}\{\bm{u}_1\}^\perp$ of dimension $|\mathcal{S}|=S\leq D-1$, i.e., 
    \begin{equation*}
        \xi^2(\bm{x})=\cfrac{\|\bm{\delta}_{\operatorname{span}\{\bm{u}_1\}}(\bm{x})\|_2^2}{\|\bm{\delta}_{\mathcal{S}_\text{orth}}(\bm{x})\|_2^2},
    \end{equation*}
    The distribution of $\xi^2(\bm{x})$ is independent of $\bm{x}$ and follows $\xi^2(\bm{x})\sim N\sigma^2\chi^2_S$, where $\chi^2_{S}$ denotes the Chi-squared distribution with $S$ degrees of freedom. In particular,
    \begin{equation*}
        \operatorname{median}(\xi^2)=\mathcal{O}\left(\cfrac{\sigma^2}{N \epsilon^2}\,S\right)\quad\text{and}\quad \operatorname{Var}(\xi^2)=\cfrac{2\sigma^4}{N^2 \epsilon^4}\,S
    \end{equation*}
\end{proposition*}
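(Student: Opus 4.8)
The plan is to exploit the fact that, for this one-step linear setup, both the trained classifier and its subspace-constrained margins are available in closed form, so the whole statement collapses to identifying the law of a single scalar. First I would compute the weight vector. Maximizing $\sum_i f(\bm{x}^{(i)})y^{(i)}=\sum_i y^{(i)}\bm{w}^T\bm{x}^{(i)}$ has the constant gradient $\sum_i y^{(i)}\bm{x}^{(i)}$, so a single ascent step from $\bm{w}=\bm{0}$ with $\alpha=1$ yields $\bm{w}=\sum_i y^{(i)}\bm{x}^{(i)}$. Writing $\bm{x}^{(i)}=\bm{U}(\epsilon y^{(i)}\oplus\bm{x}_2^{(i)})$ and using $(y^{(i)})^2=1$, the component of $\bm{w}$ along the discriminative direction is deterministic, $\bm{u}_1^T\bm{w}=N\epsilon$, while its component in $\operatorname{span}\{\bm{u}_1\}^\perp$ is $\bm{U}(0\oplus\bm{g})$ with $\bm{g}=\sum_i y^{(i)}\bm{x}_2^{(i)}$. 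Since $y^{(i)}\bm{x}_2^{(i)}$ has the same law as $\bm{x}_2^{(i)}\sim\mathcal{N}(0,\sigma^2\bm{I}_{D-1})$ and the summands are independent, $\bm{g}\sim\mathcal{N}(0,N\sigma^2\bm{I}_{D-1})$ is isotropic.

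Next I would record the margin formula for a linear classifier. The minimal perturbation in a subspace $\mathcal{S}$ must reach the hyperplane $\bm{w}^T(\bm{x}+\bm{\delta})=0$, so minimizing $\|\bm{\delta}\|_2$ over $\bm{\delta}\in\mathcal{S}$ subject to $(P_\mathcal{S}\bm{w})^T\bm{\delta}=-\bm{w}^T\bm{x}$ is a one-constraint least-norm problem, whose solution is $\|\bm{\delta}_\mathcal{S}(\bm{x})\|_2=|\bm{w}^T\bm{x}|/\|P_\mathcal{S}\bm{w}\|_2$, where $P_\mathcal{S}$ denotes the orthogonal projector onto $\mathcal{S}$ (this needs $P_\mathcal{S}\bm{w}\neq\bm{0}$, which holds for $\operatorname{span}\{\bm{u}_1\}$ since $\epsilon>0$, and almost surely for the random subspace). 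The crucial consequence is that when I form the ratio $\xi^2(\bm{x})$ the data-dependent factor $|\bm{w}^T\bm{x}|^2$ cancels, leaving $\xi^2(\bm{x})=\|P_{\mathcal{S}_\text{orth}}\bm{w}\|_2^2/\|P_{\operatorname{span}\{\bm{u}_1\}}\bm{w}\|_2^2$, which is manifestly independent of $\bm{x}$; this settles the first claim.

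It then remains to identify the two projected norms. For the one-dimensional discriminative subspace, $\|P_{\operatorname{span}\{\bm{u}_1\}}\bm{w}\|_2=|\bm{u}_1^T\bm{w}|=N\epsilon$. For the numerator, because $\mathcal{S}_\text{orth}\subseteq\operatorname{span}\{\bm{u}_1\}^\perp$ only the noise component $\bm{U}(0\oplus\bm{g})$ survives the projection, and by rotational invariance of the isotropic Gaussian $\bm{g}$ (with $\mathcal{S}_\text{orth}$ drawn independently of the data) its projection onto any fixed $S$-dimensional subspace has squared norm distributed as $N\sigma^2\chi^2_S$. Combining, $\xi^2\sim N\sigma^2\chi^2_S/(N\epsilon)^2=\tfrac{\sigma^2}{N\epsilon^2}\chi^2_S$, and the stated moments follow from $\operatorname{median}(\chi^2_S)=S+\mathcal{O}(1)$ and $\operatorname{Var}(\chi^2_S)=2S$ after scaling by $\tfrac{\sigma^2}{N\epsilon^2}$ and its square.

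I expect the only genuine obstacle to be making the projection-of-a-Gaussian step rigorous: one must argue that $\bm{g}$ is \emph{exactly} isotropic (so that the random signs $y^{(i)}$ wash out completely), and that projecting an isotropic Gaussian onto an independently chosen random $S$-subspace gives the same $N\sigma^2\chi^2_S$ law as projecting onto a fixed $S$-subspace. This is where the rotational symmetry is invoked, and it is essential that $\mathcal{S}_\text{orth}$ be independent of $\bm{g}$; everything preceding and following this point is deterministic bookkeeping.
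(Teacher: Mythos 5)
Your proposal is correct and follows essentially the same route as the paper's own proof: one-step gradient computation giving $\bm{w}_1=N\epsilon$ and isotropic Gaussian $\bm{w}_2$, the closed-form linear-classifier margin $|\bm{w}^T\bm{x}|/\|\mathcal{P}_\mathcal{S}(\bm{w})\|_2$, cancellation of the $\bm{x}$-dependent factor in the ratio, and rotational invariance to identify the projected squared norm as $N\sigma^2\chi^2_S$. Your explicit caveats (non-vanishing projections, independence of $\mathcal{S}_\text{orth}$ from the data) are sound refinements of steps the paper treats implicitly, and you correctly land on $\xi^2\sim\tfrac{\sigma^2}{N\epsilon^2}\chi^2_S$, matching the paper's derivation rather than the scaling typo in the statement itself.
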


\begin{proof}
First, note that the weights of the classifier, after one step of GD, are 
\begin{align*}
    \bm{w}=\nabla_{\bm{w}}\sum_{i=0}^{N-1} f\left(\bm{x}^{(i)}\right) y^{(i)}=\sum_{i=0}^{N-1}\bm{x}^{(i)}y^{(i)}=\bm{U}\sum_{i=0}^{N-1}\left(\bm{x_1}^{(i)}\oplus \bm{x}_2^{(i)}\right)y^{(i)}.
\end{align*}
Hence,
\begin{equation*}
    \bm{w}=\bm{U}(\bm{w}_1 \oplus \bm{w}_2) \quad \text{with}\quad \begin{cases}
        \bm{w}_1=\sum_{i=0}^{N-1} y^{(i)} \bm{x}_1^{(i)}= N\epsilon\\
        \bm{w}_2=\sum_{i=0}^{N-1} y^{(i)} \bm{x}_2^{(i)}
    \end{cases}
\end{equation*}
Since $y^{(i)}$ are uniform discrete random variables taking values from $\{-1, +1\}$, $\bm{x}_2^{(i)}$ are standard normal random variables independent from $y^{(i)}$, it can be shown that their product $y^{(i)} \bm{x}_2^{(i)}$ is also a standard normal random variable. Hence, $\bm{w}_2\sim\mathcal{N}(0, N\sigma^2 \bm{I}_{D-1})$.

Recall that for linear classifiers the distance to the decision boundary of a point $\bm{x}$ on a vector subspace $\mathcal{S}\subseteq\R^D$ can be computed in closed form as
\begin{equation*}
    \|\bm{\delta}_\mathcal{S}(\bm{x})\|_2=\cfrac{|\bm{w}^T\bm{x}|}{\|\mathcal{P}_\mathcal{S}(\bm{w})\|_2}
\end{equation*}
where $\mathcal{P}_\mathcal{S}:\R^D\rightarrow\R^D$ denotes the orthogonal projection operator onto the subspace $\mathcal{S}$. Considering this, we can compute both the margin in $\operatorname{span}\{\bm{u}_1\}$ and $\mathcal{S}_{\text{orth}}$ as 
\begin{gather*}
    \|\bm{\delta}_{\operatorname{span}\{\bm{u}_1\}}(\bm{x})\|_2=\cfrac{|\bm{w}^T\bm{x}|}{\|\mathcal{P}_{\operatorname{span}\{\bm{u}_1\}}(\bm{w})\|_2}=\cfrac{|\bm{w}^T\bm{x}|}{\|\bm{w}_1\|_2},\\
    \|\bm{\delta}_{\mathcal{S}_{\text{orth}}}(\bm{x})\|_2=\cfrac{|\bm{w}^T\bm{x}|}{\|\mathcal{P}_{\mathcal{S}_{\text{orth}}}(\bm{w})\|_2}=\cfrac{|\bm{w}^T\bm{x}|}{\|\mathcal{P}_{\mathcal{S}^{D-1}_{\text{orth}}}(\bm{w}_2)\|_2},
\end{gather*}
where $\mathcal{S}^{D-1}_{\text{orth}}\subseteq\R^{D-1}$ is the subspace generated by the last $D-1$ components of the vectors in $\mathcal{S}$.

Squaring these distances and taking their ratio we have
\begin{align*}
    \xi^2&=\cfrac{\|\bm{\delta}_{\operatorname{span}\{\bm{u}_1\}}(\bm{x})\|^2_2}{\|\bm{\delta}_{\mathcal{S}_{\text{orth}}}(\bm{x})\|^2_2}=\cfrac{\|\mathcal{P}_{\mathcal{S}^{D-1}_{\text{orth}}}(\bm{w}_2)\|_2^2}{\|\bm{w}_1\|^2_2}.
\end{align*}
Note now that due to the rotational symmetry of $\mathcal{N}(0,\bm{I}_{D-1})$
\begin{equation*}
    \mathcal{P}_{\mathcal{S}^{D-1}_{\text{orth}}}(\bm{w}_2)\sim\mathcal{N}\left(0,N\sigma^2\bm{U}_{\mathcal{S}^{D-1}_{\text{orth}}}\bm{I}_S \bm{U}_{\mathcal{S}^{D-1}_{\text{orth}}}^T\right),
\end{equation*}
where $\bm{U}_{\mathcal{S}^{D-1}_{\text{orth}}}\in\R^{{D-1}\times S}$ is a matrix whose columns form an orthonormal basis of $\mathcal{S}^{D-1}_{\text{orth}}$. Hence, $\|\mathcal{P}_{\mathcal{S}^{D-1}_{\text{orth}}}(\bm{w}_2)\|_2^2\sim N\sigma^2\chi^2_{S}$ and
\begin{equation*}
    \xi^2=\cfrac{\|\mathcal{P}_{\mathcal{S}^{D-1}_{\text{orth}}}(\bm{w}_2)\|_2^2}{\|\bm{w}_1\|^2_2}=\cfrac{\|\mathcal{P}_{\mathcal{S}^{D-1}_{\text{orth}}}(\bm{w}_2)\|_2^2}{N^2\epsilon^2}\sim \cfrac{\sigma^2}{N\epsilon^2}\,\chi^2_{S}.
\end{equation*}

Finally, plugging in the expression for the median and variance of a Chi-squared distribution we get
\begin{equation*}
    \operatorname{median}(\xi^2)\approx \cfrac{\sigma^2}{N \epsilon^2}\,S\left(1-\cfrac{2}{9S}\right)^3,
\end{equation*}
and
\begin{equation*}
    \operatorname{Var}(\xi^2)= \cfrac{2\sigma^4}{N^2 \epsilon^4}\,S.
\end{equation*}
\end{proof}

Clearly, $\operatorname{median}(\xi^2)$ decreases asymptotically with respect to the number of samples. Nevertheless, due to the finiteness of the training set, small but non-zero values of $\xi^2$ are unavoidable. Similarly, $\operatorname{Var}(\xi^2)$ only decreases quadratically with the number of samples and grows linearly with the dimensionality of $\mathcal{S}_\text{orth}$. Hence, some fluctuations in the measured margins are expected even for linear classifiers.

We demonstrate this effect in practice by repeating the experiment of Sec.~\ref{subsec:invariance_synthetic}, where instead of an MLP we use a simple logistic regression (see Table~\ref{tab:invariance_synthetic_logreg}).Clearly, although the values along $\operatorname{span}\{\bm{u}_1\}^\perp$ are quite large, they are still finite. This demonstrates that due to the finiteness of the training set and its high-dimensionality the influence of the non-discriminative directions in the final solution is significant.

\begin{table}[H]
\caption{Margin statistics of a logistic regressor trained on $\mathcal{T}_1(\epsilon=5, \sigma=1)$ along different directions ($N=10,000$, $M=1,000$, $S=3$).}
\label{tab:invariance_synthetic_logreg}
\begin{center}
\begin{small}
\begin{sc}
\begin{tabular}{lcccc}
\toprule
& $\bm{u}_1$ & $\operatorname{span}\{\bm{u}_1\}^\perp$ & $\mathcal{S}_{\text{orth}}$ & $\mathcal{S}_{\text{rand}}$ \\
\midrule
5-perc. & $2.39$ & $36.7$ & $184.95$  & $11.57$ \\
Median & $2.49$ & $38.3$ & $192.98$  & $12.08$ \\
95-perc. & $2.60$ & $39.92$ & $201.16$  & $12.59$\\
\bottomrule
\end{tabular}
\end{sc}
\end{small}
\end{center}
\end{table}

\newpage

\section{Examples of frequency ``flipped'' images}

Figure~\ref{fig:flipped_samples} shows a few example images of the frequency ``flipped'' versions of the standard computer vision datasets.
\begin{figure}[H]
\begin{center}
\subfigure[ImageNet]
{\includegraphics[width=0.7\textwidth]{flipped_imagenet.pdf}}

\subfigure[CIFAR-10]
{\includegraphics[width=0.45\textwidth]{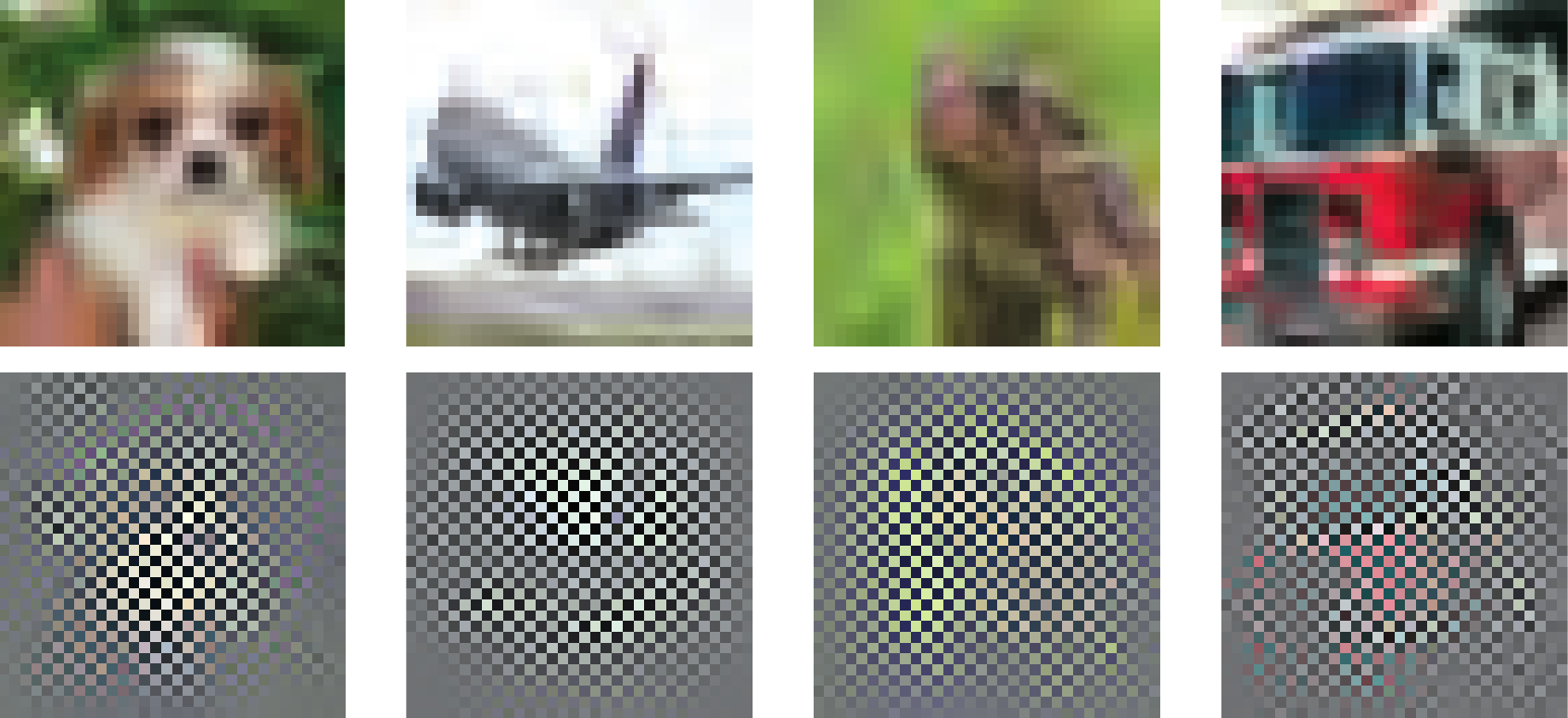}}
\hfill
\subfigure[MNIST]
{\includegraphics[width=0.45\textwidth]{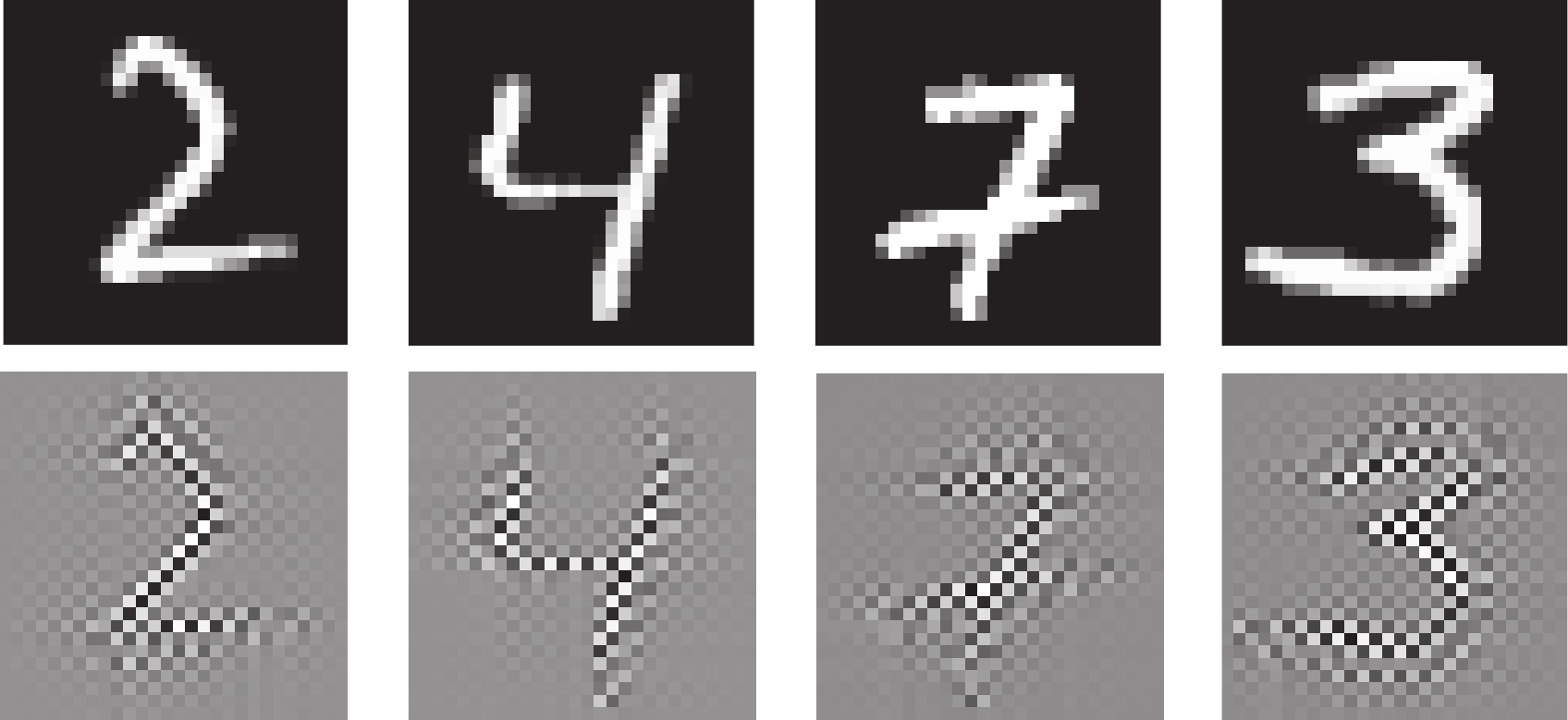}}
\caption{``Flipped'' image examples. \textbf{Top} rows show original images and \textbf{bottom} rows the ``flipped'' versions.}\label{fig:flipped_samples}
\end{center}
\end{figure}

\section{Invariance and elasticity on MNIST data}\label{sec:mnist_elastic}

We further validate our observation of Section~\ref{subsubsec:non_discriminative_directions} that small margin do indeed corresponds to directions containing discriminative features in the training set, but this time for a different dataset (MNIST), on a different network (ResNet-18), and using different discriminative features (high-frequency). In particular, we create a high-pass filtered version of MNIST ($\text{MNIST}_\text{HP}$), where we completely remove the frequency components in a $14\times 14$ square at the top left of the diagonal of the DCT-transformed images. This way we ensure that every pairwise connection between the training images (features) has zero components outside of this frequency subspace. The margin distribution of $1,000$ MNIST test samples for a ResNet-18 trained on $\text{MNIST}_{\text{HP}}$ is illustrated in Figure~\ref{fig:elasticity_mnist}. Indeed, similarly to the observations on CIFAR-10, by eliminating the low frequency features, we have forced an increased margin along these directions, while forcing the network to focus on the previously unused high frequency features.

\begin{figure}[H]
\begin{center}
\includegraphics[width=0.5\linewidth]{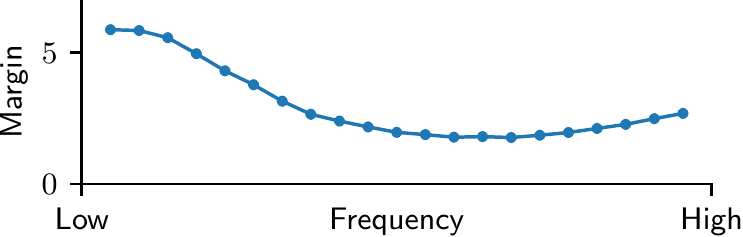}
\caption{Median margin of test samples from MNIST for a ResNet-18 trained on $\text{MNIST}_{\text{HP}}$ from scratch (test: $98.71\%$).}
\label{fig:elasticity_mnist}
\end{center}

\end{figure}

\section{Connections to catastrophic forgetting}
\label{sec:forgetting}

The elasticity to the modification of features during training gives a new perspective to the theory of catastrophic forgetting~\cite{mccloskeyCatastrophicInterferenceConnectionist1989}, as it confirms that the decision boundaries of a neural network can only exist for as long as the classifier is trained with the samples (features) that hold them together. In particular, we demonstrate this by adding and removing points from a dataset such that its discriminative features are modified during training, and hence artificially causing an elastic response on the network.

To this end, we train a DenseNet-121 on a new dataset $\mathcal{T}_{\text{LP}\cup\text{HP}}=\mathcal{T}_{\text{LP}}\cup\mathcal{T}_\text{HP}$ formed by the union of two filtered variants of CIFAR-10: $\mathcal{T}_{\text{LP}}$ is constructed by retaining only the frequency components in a $16\times 16$ square at the top-left of of the DCT-transformed CIFAR-10 images (low-pass), while for $\mathcal{T}_{\text{HP}}$ only the frequency components in a $16\times 16$ square at the bottom-right of the DCT (high-pass). This classifier has a test accuracy of $86.59\%$ and $57.29\%$ on $\mathcal{T}_{\text{LP}}$ and $\mathcal{T}_{\text{HP}}$, respectively. The median margin of $1,000$ $\mathcal{T}_{\text{LP}}$ test samples along different frequencies for this classifier is shown in blue in Figure~\ref{fig:forgetting_recovering}. As expected, the classifier has picked features across the whole spectrum with the low frequency ones probably belonging to boundaries separating samples in $\mathcal{T}_{\text{LP}}$, and the high frequency ones separating samples from $\mathcal{T}_{\text{LP}}$ and $\mathcal{T}_{\text{HP}}$\footnote{$\mathcal{T}_{\text{LP}}$ and $\mathcal{T}_{\text{HP}}$ have only discriminative features in the low-frequency and high-frequency part of the spectrum, respectively.}.

\begin{figure}[H]
\begin{center}
\subfigure[Zoom-out axes for observing the general invariance.]
{\includegraphics[width=0.5\linewidth]{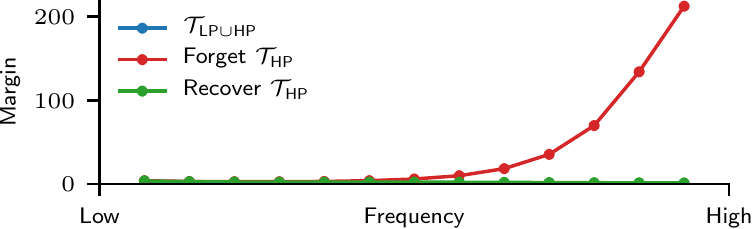}}\hfill
\subfigure[Zoom-in axes for a more detailed observation.]
{\includegraphics[width=0.5\linewidth]{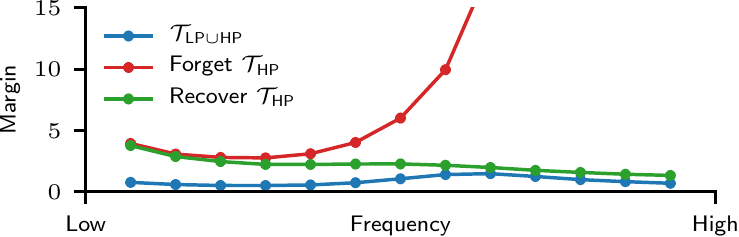}}\hfill
\caption{Median margin of $\mathcal{T}_{\text{LP}}$ test samples for a DenseNet-121. \textbf{Blue:} trained on $\mathcal{T}_{\text{LP}\cup\text{HP}}$; \textbf{Red:} after forgetting $\mathcal{T}_{\text{HP}}$; \textbf{Green:} after recovering $\mathcal{T}_{\text{HP}}$.}
\label{fig:forgetting_recovering}
\end{center}

\end{figure}

After this, we continue training the network with a linearly decaying learning rate (max. $\alpha=0.05$) for another 30 epochs, but using only $\mathcal{T}_{\text{LP}}$, achieving a final test accuracy of $87.81\%$ and $10.01\%$ on $\mathcal{T}_{\text{LP}}$ and $\mathcal{T}_{\text{HP}}$, respectively. Again, Figure~\ref{fig:forgetting_recovering} shows in red the median margin along different frequencies on test samples from $\mathcal{T}_{\text{LP}}$. The new median margin is clearly invariant on the high frequencies -- where $\mathcal{T}_{\text{LP}}$ has no discriminative features  -- and the classifier has completely \emph{erased} the boundaries that it previously had in these regions, regardless of the fact that those boundaries did not harm the classification accuracy on $\mathcal{T}_{\text{LP}}$. 

Finally, we investigate if the network is able to recover the forgotten decision boundaries that were used to classify $\mathcal{T}_\text{HP}$. We continue training the network (``forgotten'' $\mathcal{T}_\text{HP}$) for another 30 epochs, but this time by using the whole $\mathcal{T}_{\text{LP}\cup\text{HP}}$. Now this classifier achieves a final test accuracy of $86.1\%$ and $59.11\%$ on $\mathcal{T}_{\text{LP}}$ and $\mathcal{T}_{\text{HP}}$ respectively, which are very close to the corresponding accuracies of the initial network trained from scratch on $\mathcal{T}_{\text{LP}\cup\text{HP}}$ (recall: $86.59\%$ and $57.29\%$). The new median margin for this classifier is shown in green in Figure~\ref{fig:forgetting_recovering}. As we can see by comparing the green to the blue curve, the decision boundaries along the high-frequency directions can be recovered quite successfully. 
\newpage

\section{Examples of filtered images}

Figure~\ref{fig:samples_filtered} shows a few example images of the filtered versions of the standard computer vision datasets used in the Section~\ref{subsubsec:non_discriminative_directions}, \ref{sec:mnist_elastic} and \ref{sec:forgetting}.
\begin{figure}[H]
\begin{center}
\subfigure[CIFAR-10 (\textbf{Top} original images, \textbf{middle} low-pass and \textbf{bottom} high-pass)]
{\includegraphics[width=0.45\textwidth]{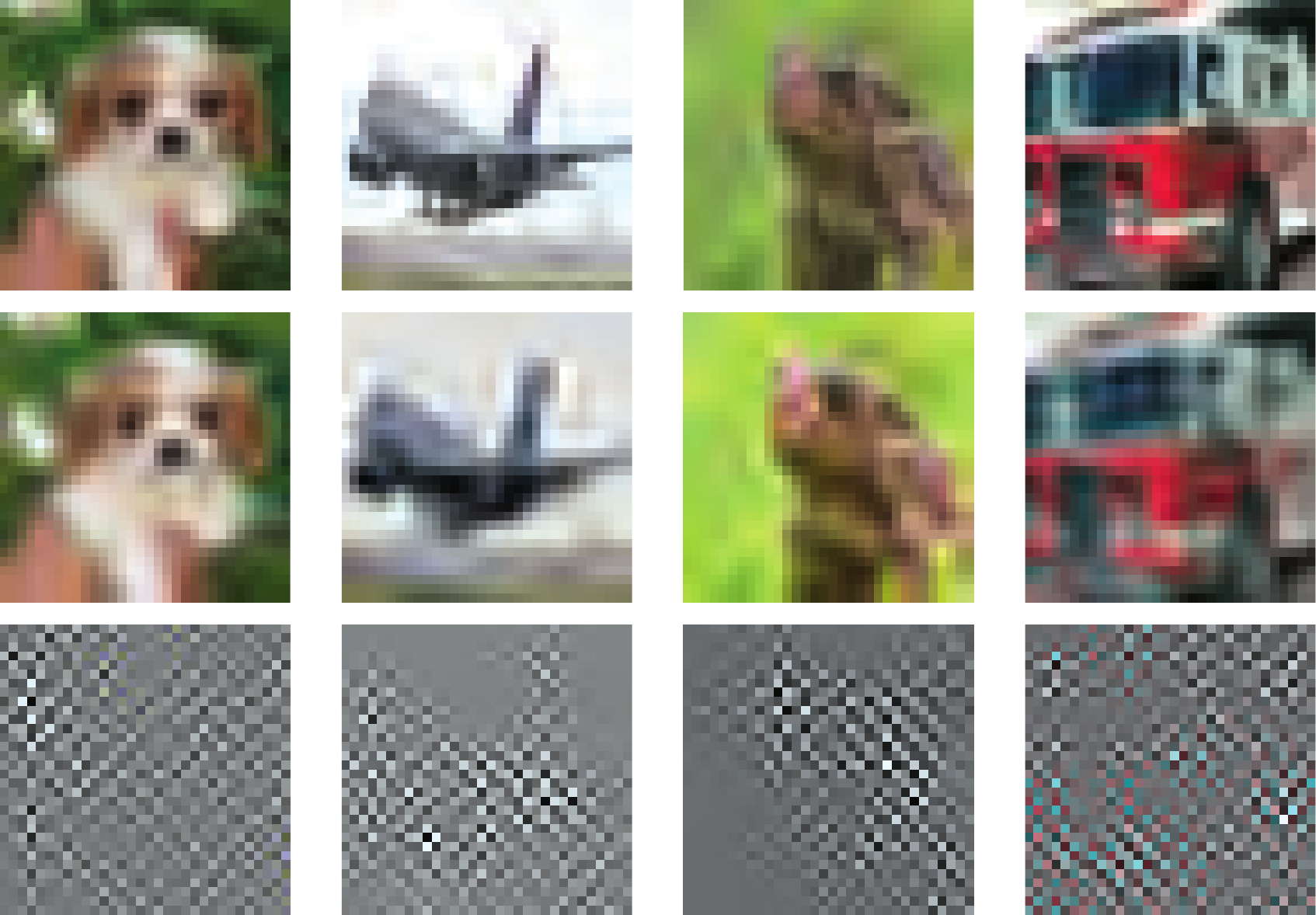}}
\hfill
\subfigure[MNIST (\textbf{Top} original images and \textbf{bottom} high-pass)]
{\includegraphics[width=0.45\textwidth]{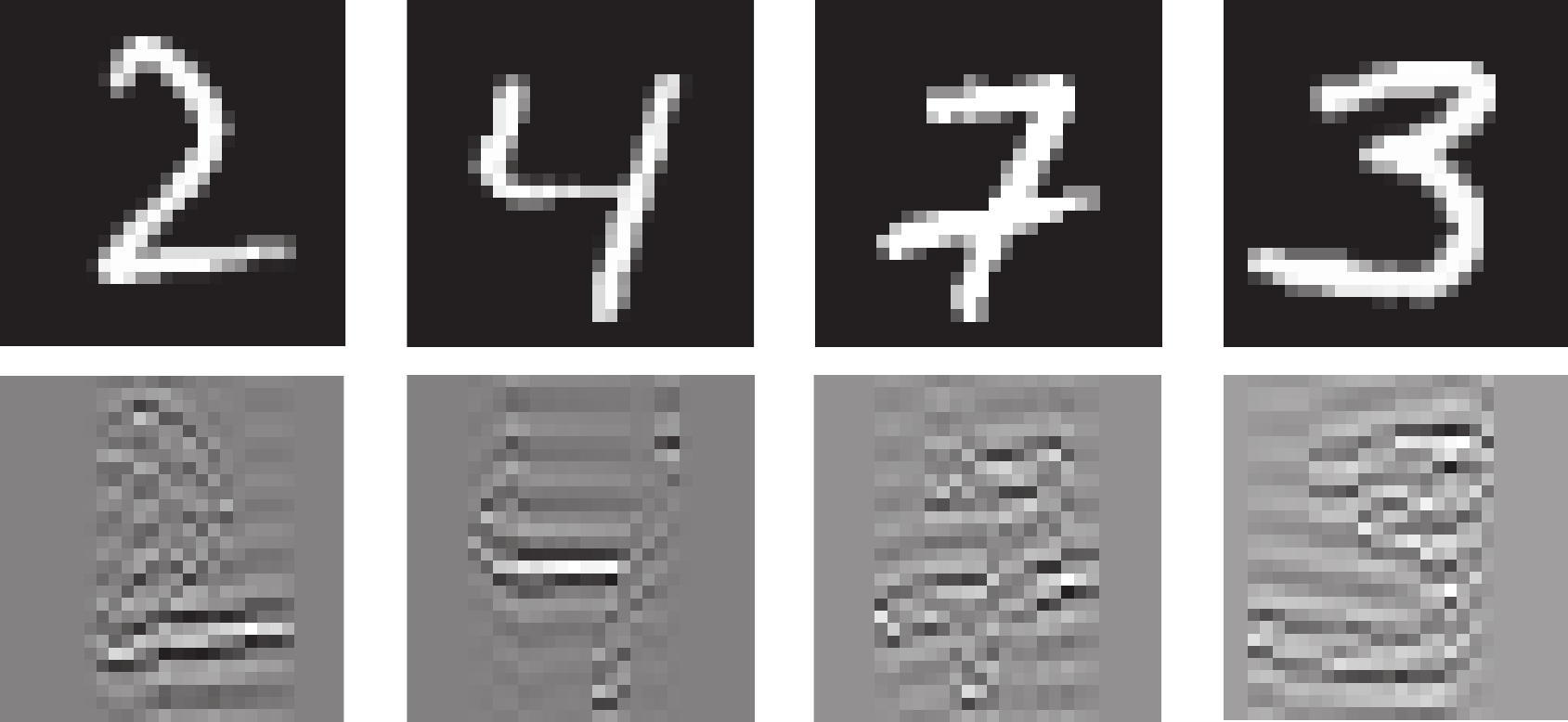}}
\caption{Filtered image examples.}\label{fig:samples_filtered}
\end{center}

\end{figure}

\section{Subspace sampling of the DCT}

In most of our experiments with real data we measured the margin of $M$ samples on a sequence of subspaces created using blocks from the DCT. In particular, we use a sequence of $K\times K$ blocks sampled from the DCT tensor either from a sliding window on the diagonal with step size $T$ or a grid with stride $T$ (c.f.~Figure~\ref{fig:dct12x12}).
\begin{figure}[H]
\begin{center}
\includegraphics[width=0.25\textwidth]{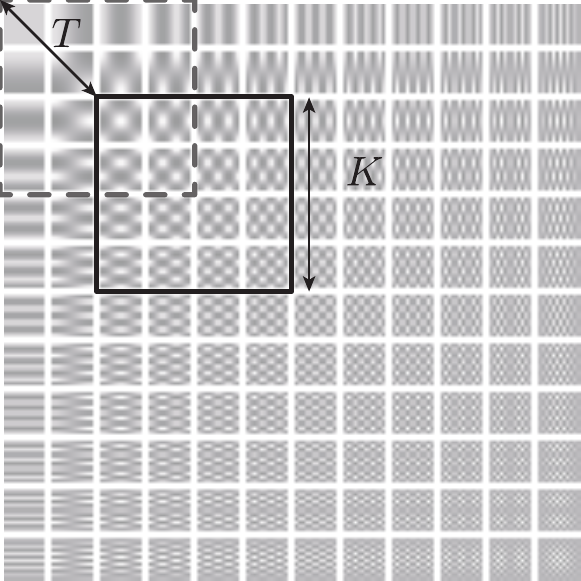}
\end{center}
\caption{Diagram illustrating the main parameters defining the subspace sequence from the diagonal of the DCT.}\label{fig:dct12x12}

\end{figure}

\newpage

\section{Training parameters}
\label{sec:train_params}

Table~\ref{tab:train_performance} shows the performance and training parameters of the different networks used in the paper. Note that the hyperparameters of these networks were not optimized in any form during this work. Instead they were selected from a set of best practices from the DAWNBench submissions that have been empirically shown to give a good trade-off in terms of convergence speed and performance. In this sense, especially for the non-standard datasets (e.g., ``flipped'' datasets), the final performance might not be the best reflection of the highest achievable performance of a given architecture. In fact, since the goal of our experiments is not to achieve the most robust models on such non-standard datasets, but rather investigate how the previously observed trends are represented in these new classifiers, no further hyperparameter tuning was applied.

\begin{table}[H]
\caption{Performance and training parameters of multiple networks trained on different datasets. All networks have been trained using SGD with momentum $0.9$ and a weight decay of $5\times10^{-4}$. For ImageNet, the training parameters are not known, since we use the pretrained models from PyTorch. For ``flipped'' ImageNet, the weight decay was set to $10^{-4}$, while for computational reasons the training was executed until the $68^\text{th}$ epoch.}
\label{tab:train_performance}
\begin{center}
\begin{sc}
\begin{tabular}{lcccccc}
\toprule
Dataset & Network & \makecell[c]{Test \\ Acc.} & Epochs & \makecell[c]{LR \\ Schedule} & max. LR & Batch \\
\midrule
\multirow{2}{*}{MNIST}
& LeNet & $99.35\%$ & \multirow{2}{*}{$30$} & \multirow{2}{*}{Triang.} & \multirow{2}{*}{$0.21$} & \multirow{2}{*}{$128$} \\ 
& ResNet-18 & $99.53\%$ & & & & \\
\midrule
\multirow{2}{*}{\makecell[l]{MNIST \\ Flipped}}
& LeNet & $99.34\%$ & \multirow{2}{*}{$30$} & \multirow{2}{*}{Triang.} & \multirow{2}{*}{$0.21$} & \multirow{2}{*}{$128$} \\ 
& ResNet-18 & $99.52\%$ & & & & \\
\midrule
\multirow{3}{*}{CIFAR-10}
& VGG-19 & $89.39\%$ & \multirow{3}{*}{$50$} & \multirow{3}{*}{Triang.} & \multirow{3}{*}{$0.21$} & \multirow{3}{*}{$128$} \\
& ResNet-18 & $90.05\%$ & & & & \\
& DenseNet-121 & $93.03\%$ & & & & \\
\midrule
\multirow{3}{*}{\makecell[l]{CIFAR-10 \\ Low Pass}}
& VGG-19 & $84.81\%$ & \multirow{3}{*}{$50$} & \multirow{3}{*}{Triang.} & \multirow{3}{*}{$0.21$} & \multirow{3}{*}{$128$} \\
& ResNet-18 & $84.77\%$ & & & & \\
& DenseNet-121 & $88.51\%$ & & & & \\
\midrule
\multirow{3}{*}{\makecell[l]{CIFAR-10 \\ Flipped}}
& VGG-19 & $87.42\%$ & \multirow{3}{*}{$50$} & \multirow{3}{*}{Triang.} & \multirow{3}{*}{$0.21$} & \multirow{3}{*}{$128$} \\
& ResNet-18 & $88.67\%$ & & &  & \\
& DenseNet-121 & $91.19\%$ & & & & \\
\midrule
\multirow{3}{*}{ImageNet}
& VGG-16 & $71.59\%$ & \multirow{3}{*}{--} & \multirow{3}{*}{--} & \multirow{3}{*}{--} & \multirow{3}{*}{--} \\
& ResNet-50& $76.15\%$ & & & & \\
& DenseNet-121 & $74.65\%$ & & & & \\
\midrule
\makecell[l]{ImageNet \\ Flipped} & ResNet-50 & $68.12\%$ & $90(68)$ & \makecell[c]{Piecewise \\ Constant} & $0.1$ & $256$ \\
\bottomrule
\end{tabular}
\end{sc}
\end{center}
\end{table}

As mentioned in the paper, all the experiments with synthetic data were trained in the same way, namely using SGD with a linearly decaying learning rate (max lr. 0.1), no explicit regularization, and trained for 500 epochs.

\newpage

\section{Cross-dataset performance}
\label{sec:cross_dataset}

We now show the performance of different networks trained with different variants of the standard computer vision datasets and tested on the rest. 

\begin{table}[H]
\caption{Multiple networks trained on a specific version of MNIST, but evaluated on different variations of it. Rows denote the dataset that each network is trained on, and columns the dataset they are evaluated on. Values on the diagonal correspond to the same variation.}

\label{tab:cross_mnist}
\begin{center}
\begin{sc}
\begin{tabular}{lccccc}
\toprule
& & MNIST & MNIST Flipped & MNIST High Pass \\
\midrule
\multirow{2}{*}{MNIST}
& LeNet & $99.35\%$ & $18.73\%$ & $44.09\%$ \\ 
& ResNet-18 & $99.53\%$ & $11.88\%$ & $15.73\%$ \\ 
\midrule
\multirow{2}{*}{\makecell[l]{MNIST \\ Flipped}}
& LeNet & $10.52\%$ & $99.34\%$ & $9.87\%$ \\ 
& ResNet-18 & $16.59\%$ & $99.52\%$ & $11.23\%$ \\ 
\midrule
\multirow{2}{*}{\makecell[l]{MNIST \\ High Pass}}
& LeNet & $96.35\%$ & $42.36\%$ & $98.65\%$ \\
& ResNet-18 & $88.38\%$ & $21.48\%$ & $98.71\%$ \\
\bottomrule
\end{tabular}
\end{sc}
\end{center}
\end{table}

\begin{table}[H]
\caption{Multiple networks trained on a specific version of CIFAR-10, but evaluated on different variations of it. Rows denote the dataset that each network is trained on, and columns the dataset they are evaluated on. Values on the diagonal correspond to the same variation.}

\label{tab:cross_cifar}
\begin{center}
\begin{sc}
\begin{tabular}{lccccc}
\toprule
& & CIFAR-10 & CIFAR-10 Flipped & CIFAR-10 Low Pass \\
\midrule
\multirow{3}{*}{CIFAR-10}
& VGG-19 & $89.39\%$ & $10.63\%$ & $61.4\%$ \\ 
& ResNet-18 & $90.05\%$ & $10\%$ & $46.99\%$ \\ 
& DenseNet-121 & $93.03\%$ & $10.3\%$ & $27.45\%$ \\ 
\midrule
\multirow{3}{*}{\makecell[l]{CIFAR-10 \\ Flipped}}
& VGG-19 & $10.77\%$ & $87.42\%$ & $10.79\%$ \\ 
& ResNet-18 & $9.91\%$ & $88.67\%$ & $9.97\%$ \\ 
& DenseNet-121 & $9.98\%$ & $91.19\%$ & $10\%$ \\ 
\midrule
\multirow{3}{*}{\makecell[l]{CIFAR-10 \\ Low Pass}}
& VGG-19 & $85.16\%$ & $10.52\%$ & $84.81\%$ \\ 
& ResNet-18 & $85.47\%$ & $10.45\%$ & $84.77\%$ \\ 
& DenseNet-121 & $89.67\%$ & $10.45\%$ & $88.51\%$ \\ 
\bottomrule
\end{tabular}
\end{sc}
\end{center}
\end{table}

\begin{table}[H]
\caption{Multiple networks trained on a specific version of ImageNet, but evaluated on different variations of it. Rows denote the dataset that each network is trained on, and columns the dataset they are evaluated on. Values on the diagonal correspond to the same variation.}

\label{tab:cross_imagenet}
\begin{center}
\begin{sc}
\begin{tabular}{lcccc}
\toprule
& & ImageNet & ImageNet Flipped \\
\midrule
ImageNet
& VGG-16 & $71.59\%$ & $0.106\%$ \\
& ResNet-50 & $76.15\%$ & $0.292\%$ \\
& DenseNet-121 & $74.65\%$ & $0.22\%$ \\
\midrule
\makecell[l]{ImageNet \\ Flipped}
& ResNet-50 & $0.184\%$ & $68.12\%$ \\ 
\bottomrule
\end{tabular}
\end{sc}
\end{center}
\end{table}

\newpage

\section{Margin distribution for standard networks}
\label{sec:margins_clean}

We show here the margin distribution on the diagonal of the DCT for different networks trained using multiple datasets using the setup specified in Section~\ref{sec:train_params}. We also show the median margin for the same $M$ samples on a grid from the DCT.

The first thing to notice is that, for a given dataset, the trend of the margins are quite similar regardless the network architecture. Also, regardless the evaluation (diagonal or grid), the observed margins between train and test samples are very similar, with the differences in the values being quite minimal. Furthermore, for the grid evaluations, the trend of the median margins with respect to subspaces of different frequencies (increasing from low to high frequencies) is similar to the corresponding one of the diagonal evaluations. Hence, the choice of the diagonal of the DCT is sufficient for measuring the margin along directions of the frequency spectrum. Finally, in every evaluation (diagonal or grid) and for every data set (train or test), ``flipping'' the representation of the data results in ``flipped'' margins as well, with CIFAR-10 results being an exception due to the quite uniform distribution of the margin across the whole frequency spectrum.

\subsection{MNIST}

\begin{figure}[H]
\begin{center}
\subfigure[LeNet (Test)]
{\includegraphics[width=0.33\textwidth]{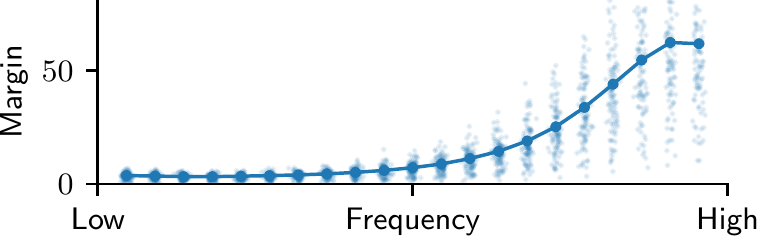}}
\subfigure[ResNet-18 (Test)]
{\includegraphics[width=0.33\textwidth]{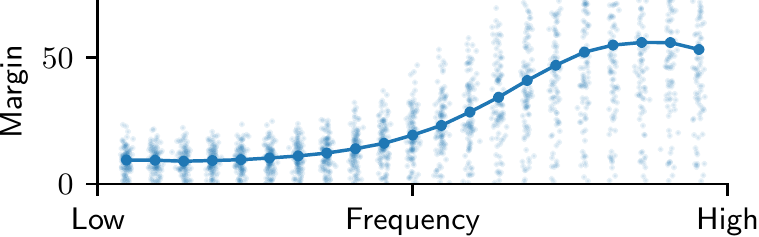}}

\subfigure[LeNet (Train)]
{\includegraphics[width=0.33\textwidth]{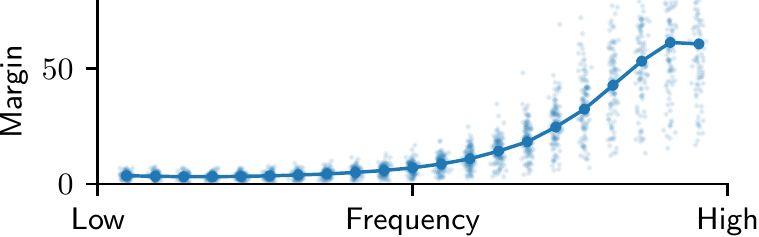}}
\subfigure[ResNet-18 (Train)]
{\includegraphics[width=0.33\textwidth]{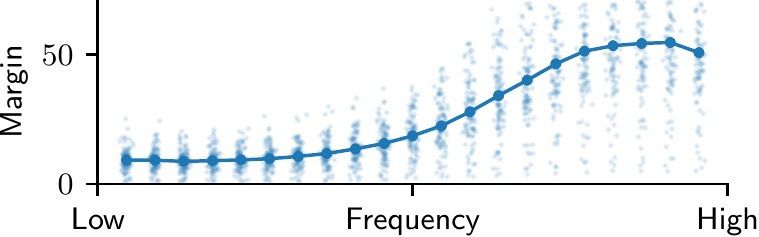}}
\caption{Diagonal \textbf{MNIST} ($M=1,000$, $K=8$, $T=1$)}
\end{center}
\end{figure}

\begin{figure}[H]
\begin{center}
\subfigure[LeNet (Test)]
{\includegraphics[width=0.225\textwidth]{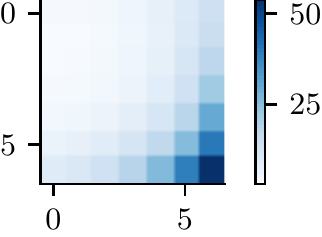}}
\hspace{2em}
\subfigure[ResNet-18 (Test)]
{\includegraphics[width=0.225\textwidth]{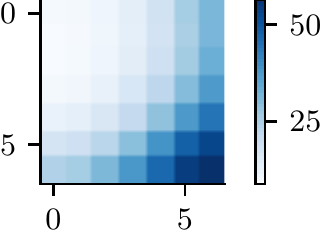}}

\subfigure[LeNet (Train)]
{\includegraphics[width=0.225\textwidth]{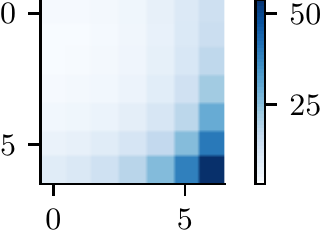}}
\hspace{2em}
\subfigure[ResNet-18 (Train)]
{\includegraphics[width=0.225\textwidth]{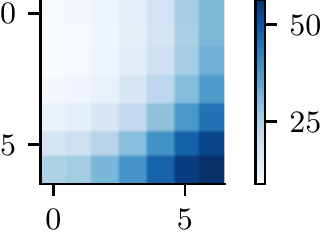}}
\caption{Grid \textbf{MNIST} ($M=500$, $K=8$, $T=3$)}
\end{center}

\end{figure}

\subsection{MNIST ``flipped''}

\begin{figure}[H]

\begin{center}
\subfigure[LeNet (Test)]
{\includegraphics[width=0.33\textwidth]{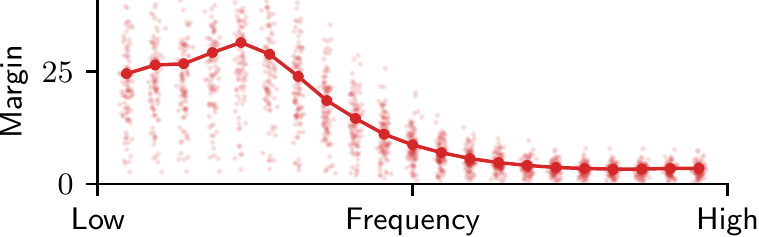}}
\subfigure[ResNet-18 (Test)]
{\includegraphics[width=0.33\textwidth]{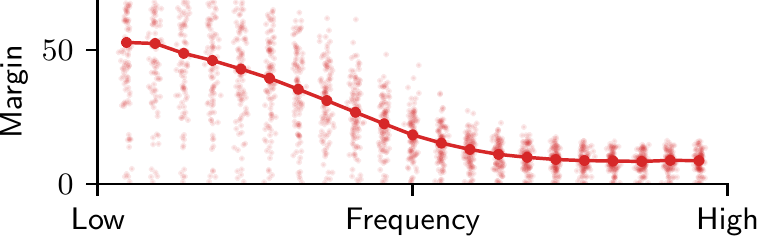}}

\subfigure[LeNet (Train)]
{\includegraphics[width=0.33\textwidth]{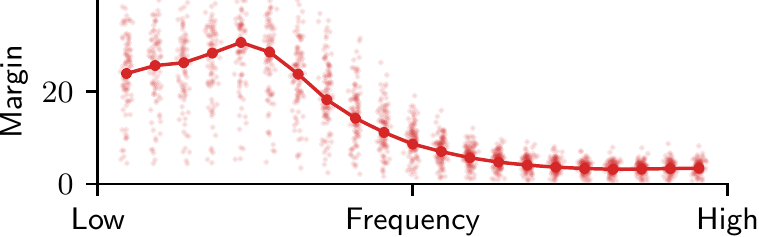}}
\subfigure[ResNet-18 (Train)]
{\includegraphics[width=0.33\textwidth]{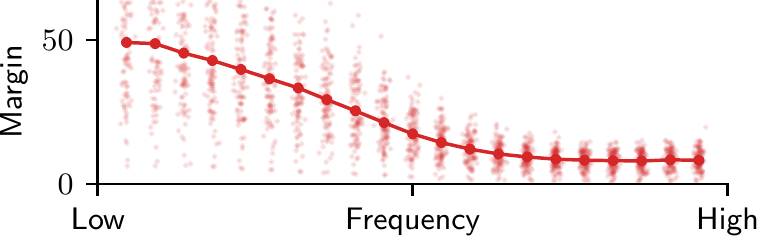}}

\caption{Diagonal \textbf{MNIST ``flipped''} ($M=1,000$, $K=8$, $T=1$)}
\end{center}

\end{figure}

\begin{figure}[H]

\begin{center}
\subfigure[LeNet (Test)]
{\includegraphics[width=0.225\textwidth]{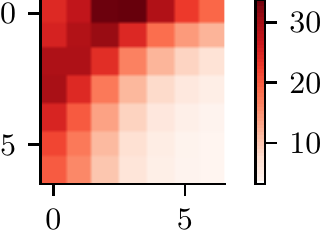}}
\hspace{2em}
\subfigure[ResNet-18 (Test)]
{\includegraphics[width=0.225\textwidth]{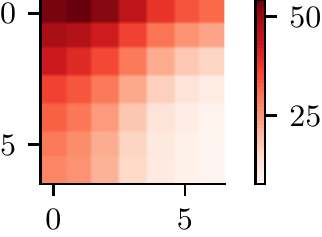}}

\subfigure[LeNet (Train)]
{\includegraphics[width=0.225\textwidth]{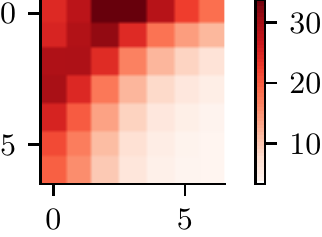}}
\hspace{2em}
\subfigure[ResNet-18 (Train)]
{\includegraphics[width=0.225\textwidth]{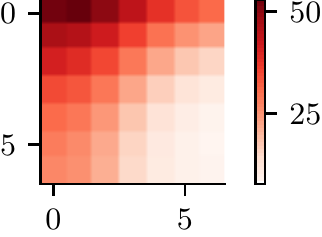}}
\caption{Grid \textbf{MNIST ``flipped''} ($M=500$, $K=8$, $T=3$)}
\end{center}

\end{figure}

\subsection{CIFAR-10}

\begin{figure}[H]

\begin{center}
\subfigure[VGG-16 (Test)]
{\includegraphics[width=0.33\textwidth]{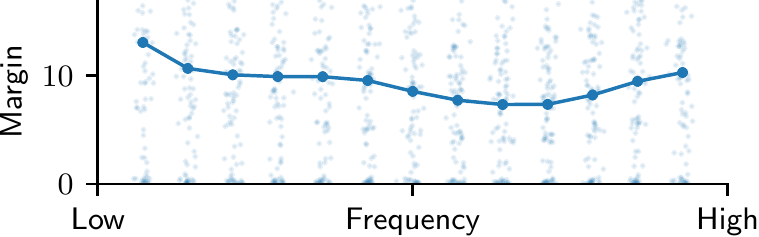}}\hfill
\subfigure[ResNet-18 (Test)]
{\includegraphics[width=0.33\textwidth]{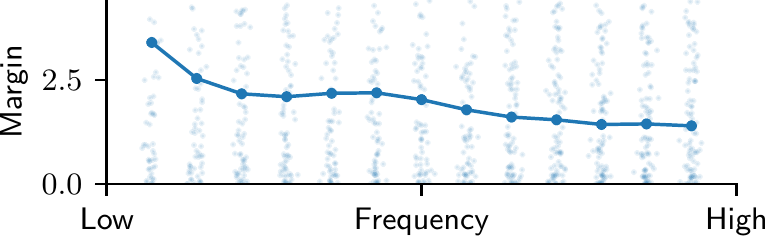}}\hfill
\subfigure[DenseNet-121 (Test)]
{\includegraphics[width=0.33\textwidth]{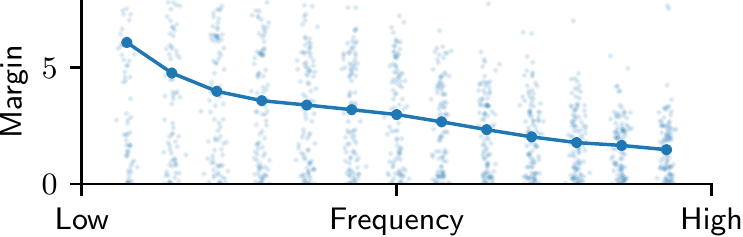}}\hfill

\subfigure[VGG-16 (Train)]
{\includegraphics[width=0.33\textwidth]{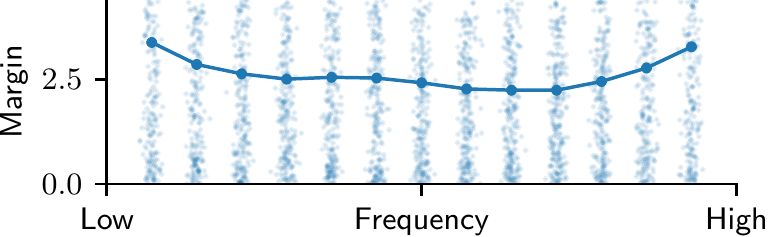}}\hfill
\subfigure[ResNet-18 (Train)]
{\includegraphics[width=0.33\textwidth]{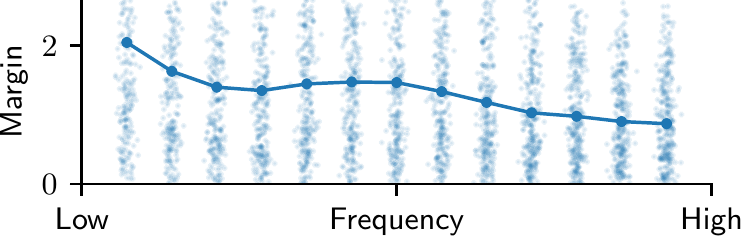}}\hfill
\subfigure[DenseNet-121 (Train)]
{\includegraphics[width=0.33\textwidth]{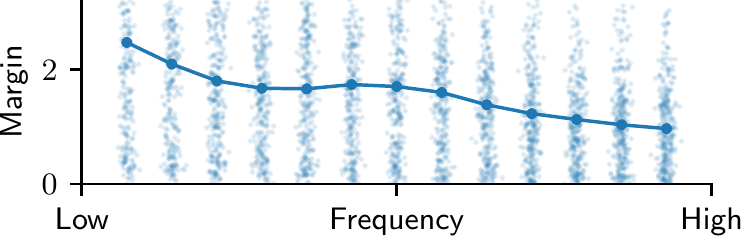}}\hfill
\caption{Diagonal \textbf{CIFAR-10} ($M=1,000$, $K=8$, $T=2$)}
\end{center}
\end{figure}

\begin{figure}[H]
\begin{center}
\subfigure[VGG-19 (Test)]
{\includegraphics[width=0.225\textwidth]{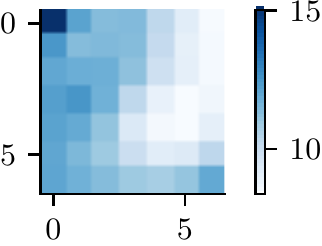}}
\hspace{2em}
\subfigure[ResNet-18 (Test)]
{\includegraphics[width=0.225\textwidth]{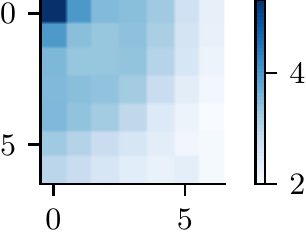}}
\hspace{2em}
\subfigure[DenseNet-121 (Test)]
{\includegraphics[width=0.225\textwidth]{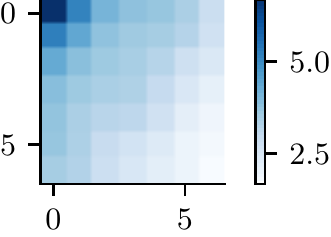}}

\subfigure[VGG-19 (Train)]
{\includegraphics[width=0.225\textwidth]{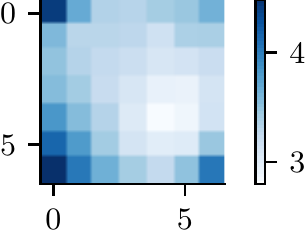}}
\hspace{2em}
\subfigure[ResNet-18 (Train)]
{\includegraphics[width=0.225\textwidth]{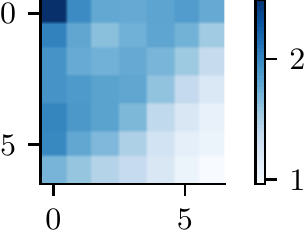}}
\hspace{2em}
\subfigure[DenseNet-121 (Train)]
{\includegraphics[width=0.225\textwidth]{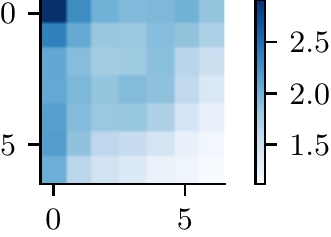}}
\caption{Grid \textbf{CIFAR-10} ($M=500$, $K=8$, $T=4$)}
\end{center}
\end{figure}

\subsection{CIFAR-10 ``flipped''}

\begin{figure}[H]
\begin{center}
\subfigure[VGG-16 (Test)]
{\includegraphics[width=0.33\textwidth]{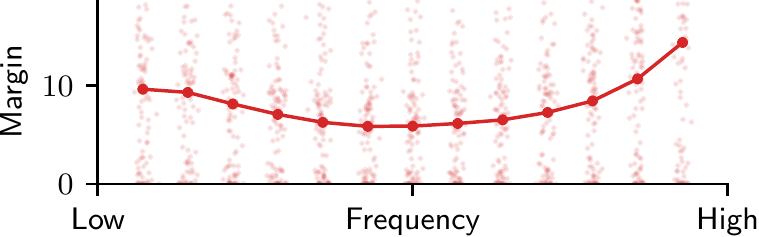}}\hfill
\subfigure[ResNet-18 (Test)]
{\includegraphics[width=0.33\textwidth]{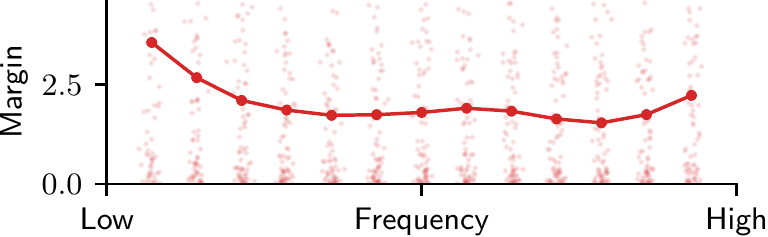}}\hfill
\subfigure[DenseNet-121 (Test)]
{\includegraphics[width=0.33\textwidth]{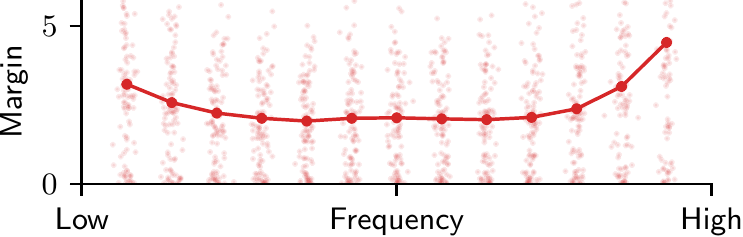}}\hfill

\subfigure[VGG-16 (Train)]
{\includegraphics[width=0.33\textwidth]{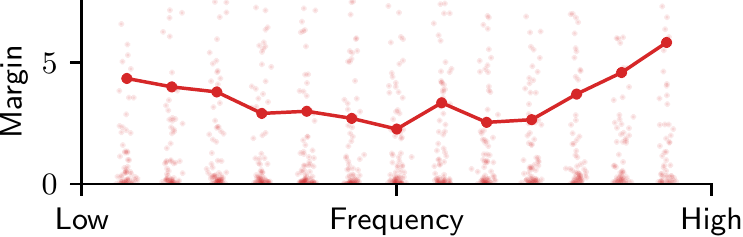}}\hfill
\subfigure[ResNet-18 (Train)]
{\includegraphics[width=0.33\textwidth]{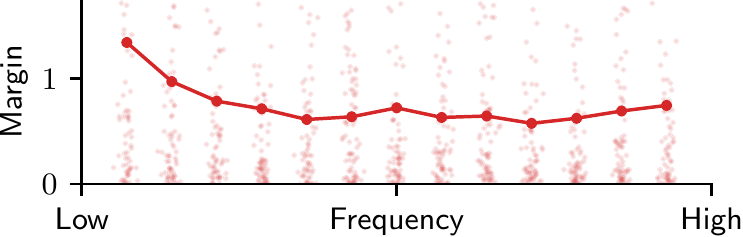}}\hfill
\subfigure[DenseNet-121 (Train)]
{\includegraphics[width=0.33\textwidth]{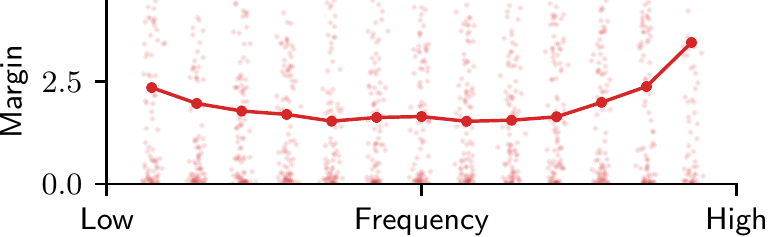}}
\caption{Diagonal \textbf{CIFAR-10 ``flipped''} ($M=1,000$, $K=8$, $T=2$)}
\label{fig:flip_cifar_all}
\end{center}
\end{figure}

\begin{figure}[H]
\begin{center}
\subfigure[VGG-19 (Test)]
{\includegraphics[width=0.225\textwidth]{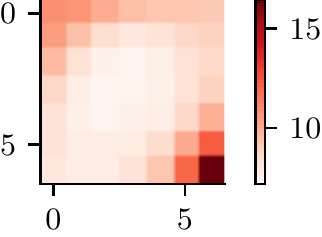}}
\hspace{2em}
\subfigure[ResNet-18 (Test)]
{\includegraphics[width=0.225\textwidth]{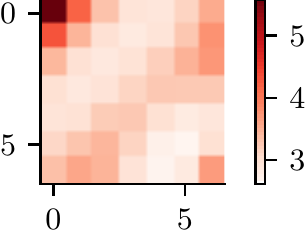}}
\hspace{2em}
\subfigure[DenseNet-121 (Test)]
{\includegraphics[width=0.225\textwidth]{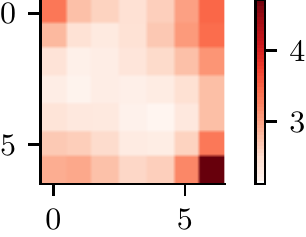}}

\subfigure[VGG-19 (Train)]
{\includegraphics[width=0.225\textwidth]{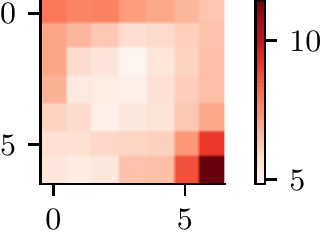}}
\hspace{2em}
\subfigure[ResNet-18 (Train)]
{\includegraphics[width=0.225\textwidth]{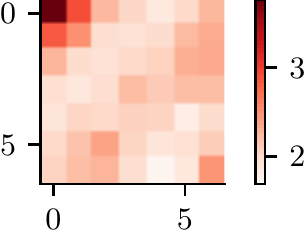}}
\hspace{2em}
\subfigure[DenseNet-121 (Train)]
{\includegraphics[width=0.225\textwidth]{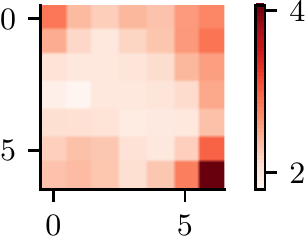}}
\caption{Grid \textbf{CIFAR-10 ``flipped''} ($M=500$, $K=8$, $T=4$)}
\end{center}
\end{figure}

\subsection{ImageNet}

\begin{figure}[H]
\begin{center}
\subfigure[VGG-16 (Test)]
{\includegraphics[width=0.33\textwidth]{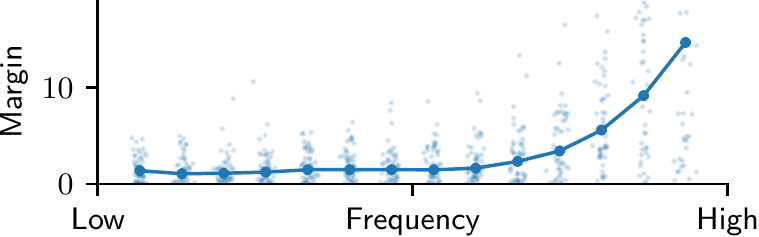}}\hfill
\subfigure[ResNet-50 (Test)]
{\includegraphics[width=0.33\textwidth]{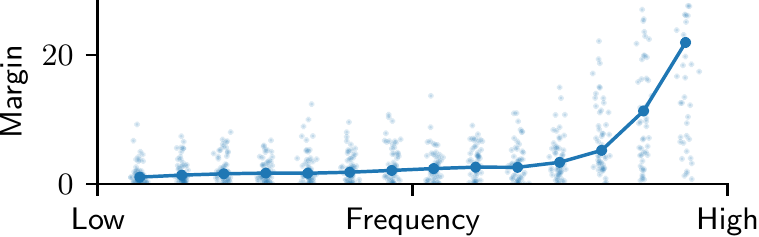}}\hfill
\subfigure[DenseNet-121 (Test)]
{\includegraphics[width=0.33\textwidth]{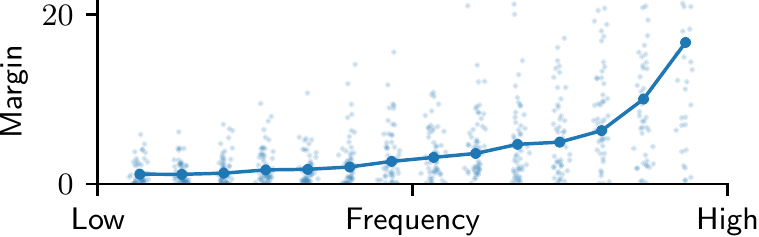}}\hfill

\subfigure[VGG-16 (Train)]
{\includegraphics[width=0.33\textwidth]{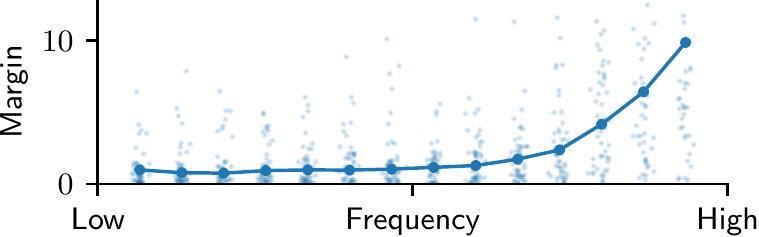}}\hfill
\subfigure[ResNet-50 (Train)]
{\includegraphics[width=0.33\textwidth]{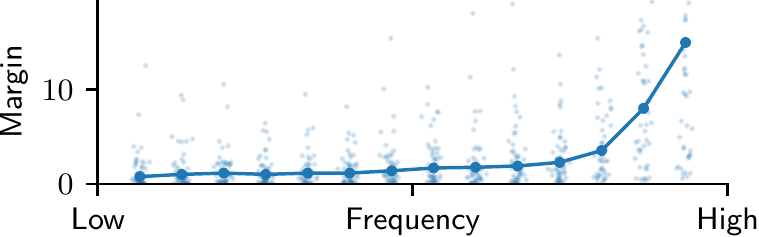}}\hfill
\subfigure[DenseNet-121 (Train)]
{\includegraphics[width=0.33\textwidth]{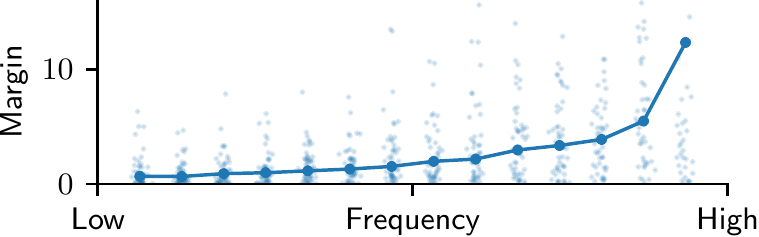}}\hfill
\caption{Diagonal \textbf{ImageNet} ($M=500$, $K=16$, $T=16$)}
\end{center}
\end{figure}

\begin{figure}[H]
\begin{center}
\subfigure[VGG-16 (Test)]
{\includegraphics[width=0.225\textwidth]{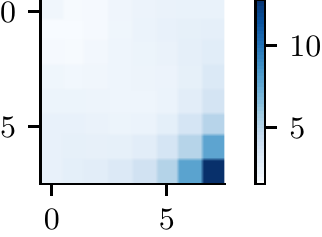}}
\hspace{2em}
\subfigure[ResNet-18 (Test)]
{\includegraphics[width=0.225\textwidth]{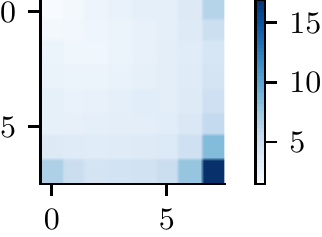}}
\hspace{2em}
\subfigure[DenseNet-121 (Test)]
{\includegraphics[width=0.225\textwidth]{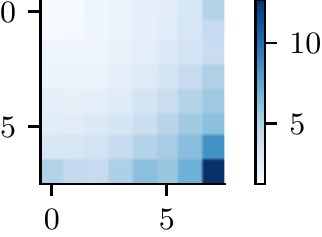}}

\subfigure[VGG-16 (Train)]
{\includegraphics[width=0.225\textwidth]{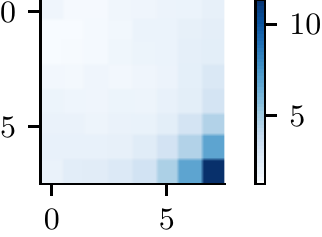}}
\hspace{2em}
\subfigure[ResNet-50 (Train)]
{\includegraphics[width=0.225\textwidth]{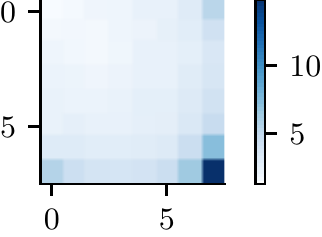}}
\hspace{2em}
\subfigure[DenseNet-121 (Train)]
{\includegraphics[width=0.225\textwidth]{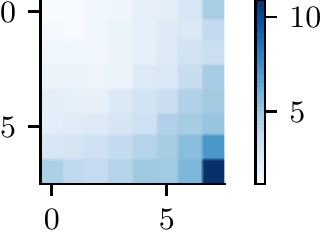}}
\caption{Grid \textbf{ImageNet} ($M=250$, $K=16$, $T=28$)}
\end{center}
\end{figure}

\newpage

\subsection{ImageNet ``flipped''}

\begin{figure}[H]
\begin{center}
\subfigure[ResNet-50 (Test)]
{\includegraphics[width=0.33\textwidth]{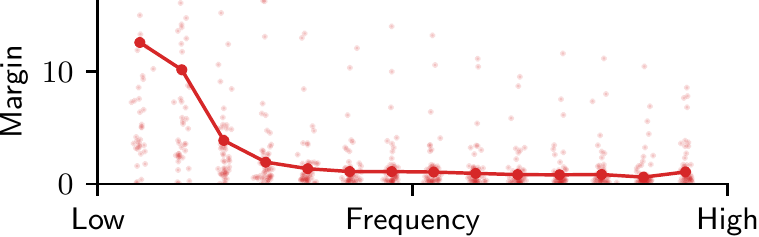}}
\subfigure[ResNet-50 (Train)]
{\includegraphics[width=0.33\textwidth]{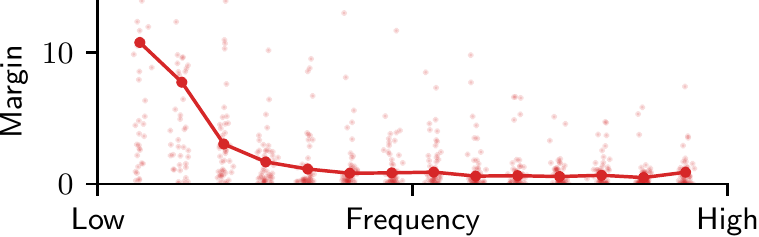}}
\caption{Diagonal \textbf{ImageNet ``flipped''} ($M=500$, $K=16$, $T=16$)}
\end{center}
\end{figure}

\begin{figure}[H]
\begin{center}
\subfigure[ResNet-50 (Test)]
{\includegraphics[width=0.225\textwidth]{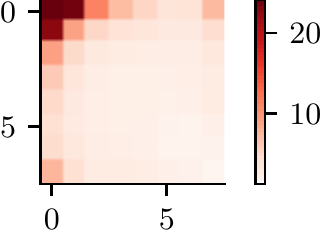}}
\hspace{2em}
\subfigure[ResNet-50 (Train)]
{\includegraphics[width=0.225\textwidth]{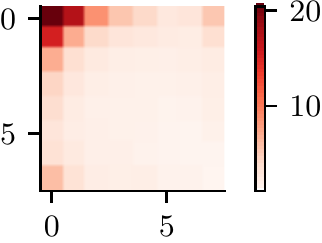}}
\caption{Grid \textbf{ImageNet ``flipped''} ($M=250$, $K=16$, $T=28$)}
\end{center}
\end{figure}

\section{Adversarial training parameters}
\label{sec:adv_train_params}

Table~\ref{tab:adv_train_performance} shows the performance and adversarial training parameters of the different networks used in the paper. Note that the hyperparameters of these networks were not optimized in any form during this work. Instead they were selected from a set of best practices from the DAWNBench submissions that have been empirically shown to give a good trade-off in terms of convergence speed and performance. Again, as stated in Section~\ref{sec:train_params}, especially for the non-standard datasets (e.g., ``flipped'' datasets), the final performance might not be the best reflection of the highest achievable performance or robustness of a given architecture, since no further hyperparameter tuning was applied.

\begin{table}[H]
\caption{Performance and attack parameters of multiple networks adversarially trained using $\ell_2$-PGD. The training parameters are similar to the ones of Table~\ref{tab:train_performance}. For ImageNet we use the adversarially trained ResNet-50 provided by~\cite{robustness}.}

\label{tab:adv_train_performance}
\begin{center}
\begin{sc}
\begin{tabular}{lccccccc}
\toprule
Dataset & Network & \makecell[c]{Standard \\ Test Acc.} & \makecell[c]{Adv. \\ Test Acc.} & Epochs & \makecell[c]{$\ell_2$ ball \\ radius} & Steps\\
\midrule
\multirow{2}{*}{MNIST}
& LeNet & $98.32\%$ & $76.01\%$ & \multirow{2}{*}{$25$} & \multirow{2}{*}{$2$} & \multirow{2}{*}{$7$}\\ 
& ResNet-18 & $98.89\%$ & $80.26\%$ & & &\\ 
\midrule
\multirow{2}{*}{\makecell[l]{MNIST \\ Flipped}}
& LeNet & $98.29\%$ & $74.68\%$ & \multirow{2}{*}{$25$} & \multirow{2}{*}{$2$} & \multirow{2}{*}{$7$}\\ 
& ResNet-18 & $98.75\%$ & $81.97\%$ & & &\\ 
\midrule
\multirow{3}{*}{CIFAR-10}
& VGG-19 & $73.76\%$ & $50.15\%$ & \multirow{3}{*}{$50$} & \multirow{3}{*}{$1$} & \multirow{3}{*}{$7$}\\
& ResNet-18 & $82.20\%$ & $52.38\%$ & & & \\ 
& DenseNet-121 & $82.90\%$ & $54.86\%$ & & & \\ 
\midrule
\multirow{3}{*}{\makecell[l]{CIFAR-10 \\ Flipped}}
& VGG-19 & $71.39\%$ & $35.64\%$ & \multirow{3}{*}{$50$} & \multirow{3}{*}{$1$} & \multirow{3}{*}{$7$}\\
& ResNet-18 & $73.64\%$ & $37.24\%$ & & &\\ 
& DenseNet-121 & $78.32\%$ & $42.32\%$ & & & \\ 
\midrule
ImageNet & ResNet-50 & $57.90\%$ & $35.16$ & -- & $3$ & $20$\\
\bottomrule
\end{tabular}
\end{sc}
\end{center}
\end{table}

\section{Description of L2-PGD attack on frequency ``flipped'' data}
\label{sec:dykstra}

Adversarial training~\cite{madryDeepLearningModels2018} is the de-facto method used to improve the robustness of modern deep classifiers. It consists in the approximation of the robust classification problem $\min_f \max_{\bm{\delta}\in\mathcal{C}}\mathcal{L}(f(\bm{x}+\bm{\delta}))$ with an alternating algorithm that solves the outer maximization using a variant of stochastic gradient descent, and the inner maximization using some adversarial attack (e.g., PGD). The constraint set $\mathcal{C}\subseteq\R^D$ encodes the ``imperceptibility'' of the perturbation.

In our case, when dealing with natural images coming from the standard datasets (i.e., MNIST, CIFAR-10 and ImageNet) we use the standard $\ell_2$ PGD attack to approximate the inner maximization. This attack consists in the solution of $\arg\max_{\bm{\delta}\in\mathcal{C}}\; \mathcal{L}(f(\bm{x}+\bm{\delta})$ using projected steepest descent, i.e., iterating
\begin{equation*}
    \bm{\delta}_{n+1}=\mathcal{P}_\mathcal{C}\left(\bm{\delta}_{n}+\alpha \cfrac{\nabla_{\bm{\delta}}\mathcal{L}(f(\bm{x}+\bm{\delta}_n)}{\|\nabla_{\bm{\delta}}\mathcal{L}(f(\bm{x}+\bm{\delta}_n)\|_2}\right),
\end{equation*}
where $\mathcal{C}=\{\bm{\delta}\in\R^D:  \|\bm{\delta}\|_2^2\leq \epsilon,\quad \bm{0}\preceq \bm{\delta}\preceq \bm{1}\}$. The projection operator $\mathcal{P}_\mathcal{C}:\R^D\rightarrow\R^D$ can efficiently be implemented as
\begin{equation*}
    \mathcal{P}_\mathcal{C}(\bm{x})=\operatorname{clip}_{[0,1]}\left(\min\{\|\bm{\delta}\|_2,\epsilon\}\cfrac{\bm{\delta}}{\|\bm{\delta}\|_2}\right),
\end{equation*}
where
\begin{equation*}
    [\operatorname{clip}_{[0,1]}\left(\bm{x}\right)]_i=\begin{cases}
        0 & [\bm{x}]_i\leq 0\\
        [\bm{x}]_i & [\bm{x}]_i< 0\leq 1\\
        1 & [\bm{x}]_i> 1
    \end{cases}.
\end{equation*}

However, when we train using ``flipped'' data we need to make sure that we also transform the constraint set $\mathcal{C}$. Indeed, recall that the goal of training with ``flipped'' datasets is to check that the margin distribution approximately follows the data representation. Adversarial training tries to maximize the loss of the classifier by finding a worst-case example inside a constrained search space that is parameterized in terms of some properties of the input data (e.g., distance to a sample, or color box constraints). For this reason, if our goal is to check what happens when we only change the data representation but keep the same training scheme, it is important to make sure that adversarial training has the same search space regardless of the data representation. The flipping operator is reversible, which means we can always go back to our initial representation. Hence, by respecting the constraints over the initial representation, we make sure that the resulted adversarial examples in the new representation will still satisfy the constraints when reversed to the initial representation (image space). We achieve this reparameterization efficiently by modifying the projection operator on PGD.

Let $\hat{\bm{x}}=\bm{D}_\text{DCT}^T\operatorname{flip}\left(\bm{D}_\text{DCT}\bm{x}\right)$ denote a frequency ``flipped'' data sample. The $\ell_2$ PGD attack on this representation solves $\arg\max_{\hat{\bm{\delta}}\in\hat{\mathcal{C}}}\; \mathcal{L}(f(\hat{\bm{x}}+\hat{\bm{\delta}})$, where $\hat{\mathcal{C}}=\left\{\hat{\bm{\delta}}\in\R^D: \bm{D}_\text{DCT}^T\operatorname{flip}\left(\bm{D}_\text{DCT}\hat{\bm{\delta}}\right)\in\mathcal{C}\right\}$. Therefore, the new ``flipped'' PGD algorithm becomes
\begin{equation*}
    \hat{\bm{\delta}}_{n+1}=\mathcal{P}_{\hat{\mathcal{C}}}\left(\hat{\bm{\delta}}_{n}+\alpha \cfrac{\nabla_{\hat{\bm{\delta}}}\mathcal{L}(f(\hat{\bm{x}}+\hat{\bm{\delta}}_n)}{\|\nabla_{\hat{\bm{\delta}}}\mathcal{L}(f(\hat{\bm{x}}+\hat{\bm{\delta}}_n)\|_2}\right),
\end{equation*}
where $\mathcal{P}_{\hat{\mathcal{C}}}$ can be efficiently implemented using Dykstra's projection algorithm~\cite{dykstra}. This is, start with $\hat{\bm{x}}_0 =\hat{\bm{x}}, \hat{\bm{p}}_0=\hat{\bm{q}}_0=\bm{0}$ and update by
\begin{align*}
    \hat{\bm{y}}_k &= \min\left\{\|\hat{\bm{x}}_k+\hat{\bm{p}}_k\|_2,\epsilon\right\}\;\cfrac{\hat{\bm{x}}_k+\hat{\bm{p}}_k}{\|\hat{\bm{x}}_k+\hat{\bm{p}}_k\|_2}\\
    \hat{\hat{\bm{p}}}_{k+1} &= \hat{\bm{x}}_k+\hat{\bm{p}}_k - \hat{\bm{y}}_k\\
    \bm{x}_{k+1} &= \operatorname{clip}_{[0,1]}\left( \bm{D}_\text{DCT}^T\operatorname{flip}\left(\bm{D}_\text{DCT}(\hat{\bm{y}}_k + \hat{\bm{q}}_k)\right) \right)\\
    \hat{\bm{x}}_{k+1}&=\bm{D}_\text{DCT}^T\operatorname{flip}\left(\bm{D}_\text{DCT}\bm{x}_{k+1}\right)\\
    \hat{\bm{q}}_{k+1} &=\hat{\bm{y}}_k + \hat{\bm{q}}_k - \hat{\bm{x}}_{k+1}. 
\end{align*}
The sequence $(\hat{\bm{x}}_k)$ converges to $\mathcal{P}_{\hat{\mathcal{C}}}(\hat{\bm{x}})$. In our experiments we use $5$ iterations of the algorithm as these are enough to achieve a small projection error.

\newpage

\section{Spectral decomposition on frequency ``flipped'' data}

Following the results presented in Section~\ref{subsubsec:instability_real}, we now show in Figure~\ref{fig:spectral_standard_flip} the spectral decomposition of the adversarial perturbations crafted during adversarial training for the frequency ``flipped'' CIFAR-10 dataset on a DenseNet-121 network. In contrast to the spectral decomposition of the perturbations on CIFAR-10 (left), the energy of the frequency ``flipped'' CIFAR-10 perturbations (right) remains concentrated in the high part of the spectrum during the whole training process, and has hardly any presence in the low frequencies. In other words, the frequency content of the $\ell_2$-PGD adversarial perturbations also ``flips'' (c.f. Section~\ref{sec:dykstra} and \ref{sec:margins_robust}).

\begin{figure}[ht]

\begin{center}
\subfigure[CIFAR-10 adversarially trained model.]
{\includegraphics[width=0.45\linewidth]{spectral.pdf}}\hfill
\subfigure[Frequency ``flipped'' CIFAR-10 adversarially trained model.]
{\includegraphics[width=0.45\linewidth]{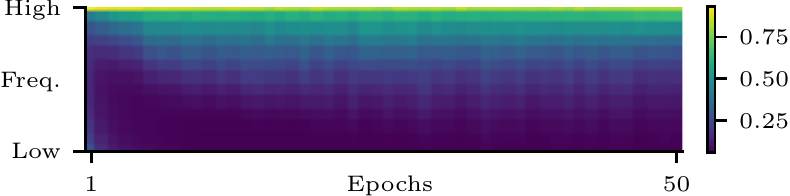}}
\caption{Energy decomposition in subspaces of the DCT diagonal of adversarial perturbations used during adversarial training ($\ell_2$ PGD with $\epsilon=1$) on 1,000 (a) CIFAR-10 and (b) frequency ``flipped'' CIFAR-10 training samples per epoch for a DenseNet-121. The plot shows 95-percentile of energy.}
\label{fig:spectral_standard_flip}
\end{center}

\end{figure}

\newpage

\section{Margin distribution for adversarially trained networks}
\label{sec:margins_robust}

We show here the margin distribution on the diagonal of the DCT for different adversarially trained networks on multiple datasets using the setup specified in Section~\ref{sec:adv_train_params}. We also show the median margin for the same $M$ samples on a grid from the DCT.

The first thing to notice for the standard datasets is that, for every network and dataset, there is a huge increase along the high-frequency directions, when compared to the margins observed in Section~\ref{sec:margins_clean}. Apart from these, similarly to the observations of Section~\ref{sec:margins_clean}, the margins on both train and test samples are very similar, with the differences in the values being quite minimal, while again the trend of the margins with respect to subspaces of different frequencies (increasing from low to high frequencies) is similar in both the grid and the diagonal evaluations. Finally, in every evaluation (diagonal or grid) and for every data set (train or test), ``flipping'' the representation of the data results in ``flipped'' margins as well; even for the case of CIFAR-10 where for standard training (Figure~\ref{fig:flip_cifar_all}) the ``flipping'' was not obvious due to the quite uniform distribution of the margin.

\subsection{MNIST}

\begin{figure}[H]

\begin{center}
\subfigure[LeNet (Test)]
{\includegraphics[width=0.33\textwidth]{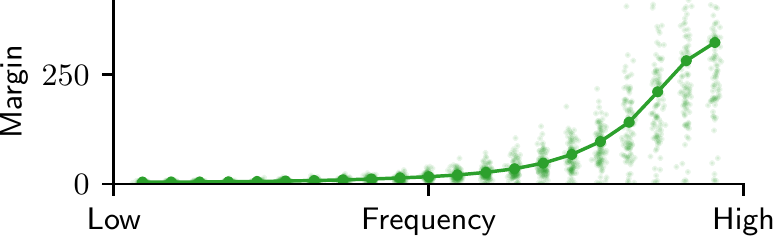}}
\subfigure[ResNet-18 (Test)]
{\includegraphics[width=0.33\textwidth]{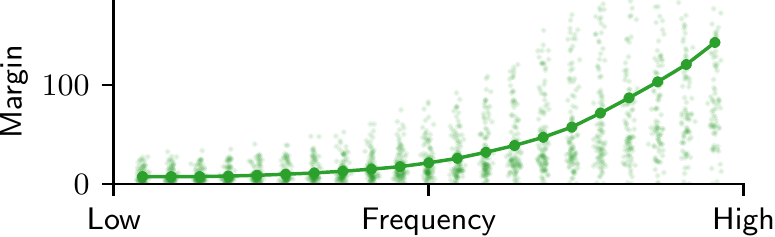}}

\subfigure[LeNet (Train)]
{\includegraphics[width=0.33\textwidth]{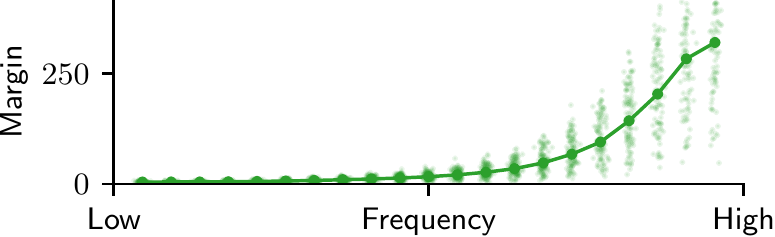}}
\subfigure[ResNet-18 (Train)]
{\includegraphics[width=0.33\textwidth]{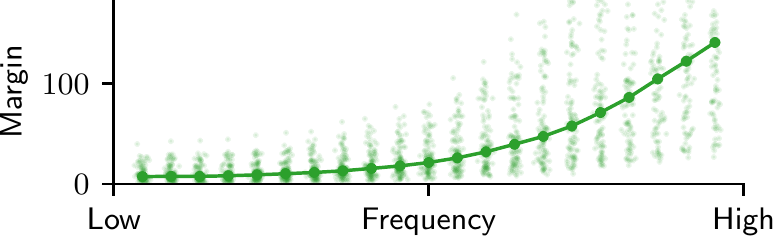}}
\caption{Diagonal \textbf{MNIST} adversarially trained ($M=1,000$, $K=8$, $T=1$)}
\end{center}
\end{figure}

\begin{figure}[H]
\begin{center}
\subfigure[LeNet (Test)]
{\includegraphics[width=0.225\textwidth]{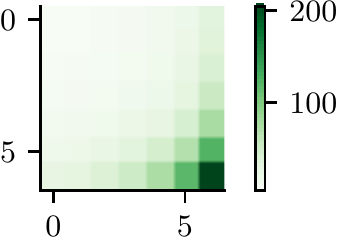}}
\hspace{2em}
\subfigure[ResNet-18 (Test)]
{\includegraphics[width=0.225\textwidth]{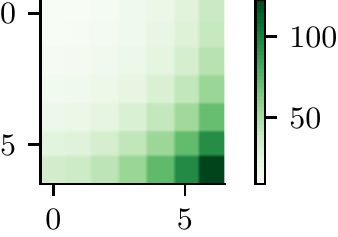}}

\subfigure[LeNet (Train)]
{\includegraphics[width=0.225\textwidth]{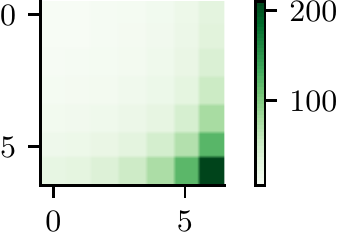}}
\hspace{2em}
\subfigure[ResNet-18 (Train)]
{\includegraphics[width=0.225\textwidth]{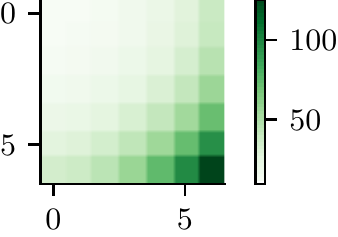}}
\caption{Grid \textbf{MNIST} adversarially trained ($M=500$, $K=8$, $T=3$)}
\end{center}
\end{figure}

\subsection{MNIST ``flipped''}

\begin{figure}[H]
\begin{center}
\subfigure[LeNet (Test)]
{\includegraphics[width=0.33\textwidth]{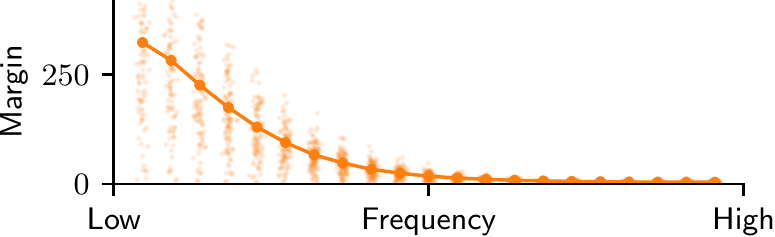}}
\subfigure[ResNet-18 (Test)]
{\includegraphics[width=0.33\textwidth]{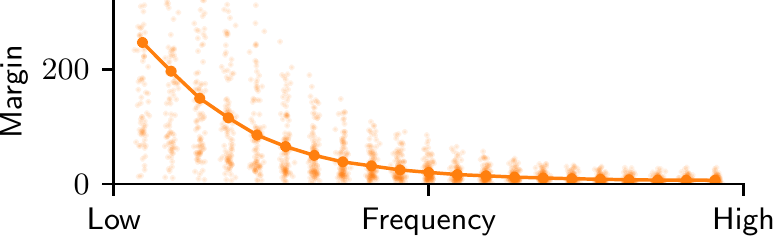}}

\subfigure[LeNet (Train)]
{\includegraphics[width=0.33\textwidth]{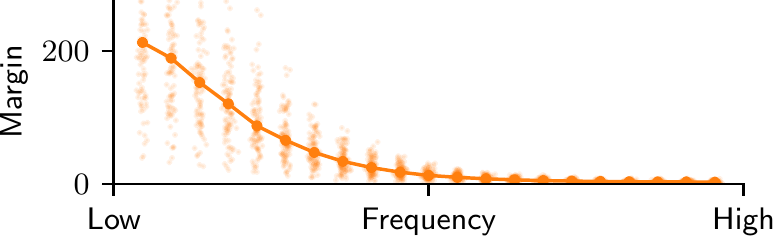}}
\subfigure[ResNet-18 (Train)]
{\includegraphics[width=0.33\textwidth]{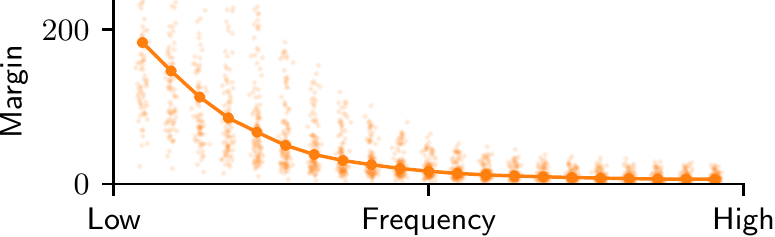}}
\caption{Diagonal \textbf{MNIST ``flipped''} adversarially trained ($M=1,000$, $K=8$, $T=1$)}
\end{center}
\end{figure}

\begin{figure}[H]
\begin{center}
\subfigure[LeNet (Test)]
{\includegraphics[width=0.225\textwidth]{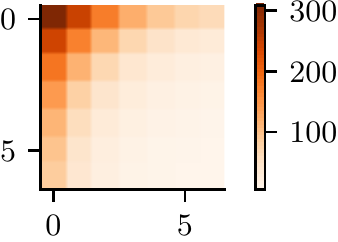}}
\hspace{2em}
\subfigure[ResNet-18 (Test)]
{\includegraphics[width=0.225\textwidth]{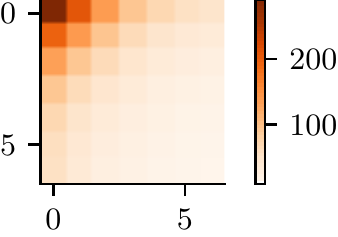}}

\subfigure[LeNet (Train)]
{\includegraphics[width=0.225\textwidth]{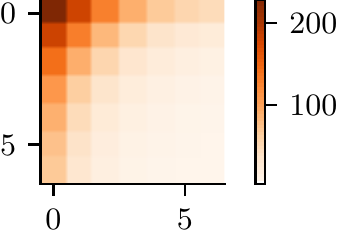}}
\hspace{2em}
\subfigure[ResNet-18 (Train)]
{\includegraphics[width=0.225\textwidth]{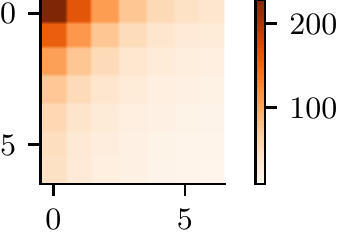}}
\caption{Grid \textbf{MNIST ``flipped''} adversarially trained ($M=500$, $K=8$, $T=3$)}
\end{center}
\end{figure}

\subsection{CIFAR-10}

\begin{figure}[H]

\begin{center}
\subfigure[VGG-16 (Test)]
{\includegraphics[width=0.33\textwidth]{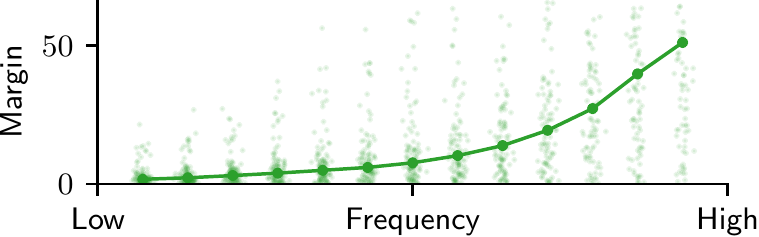}}\hfill
\subfigure[ResNet-18 (Test)]
{\includegraphics[width=0.33\textwidth]{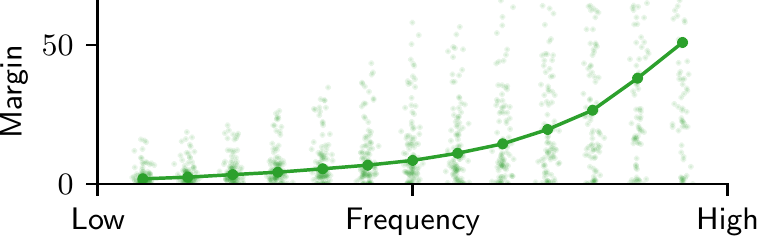}}\hfill
\subfigure[DenseNet-121 (Test)]
{\includegraphics[width=0.33\textwidth]{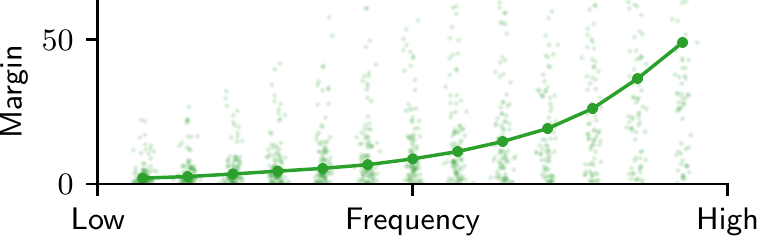}}\hfill

\subfigure[VGG-16 (Train)]
{\includegraphics[width=0.33\textwidth]{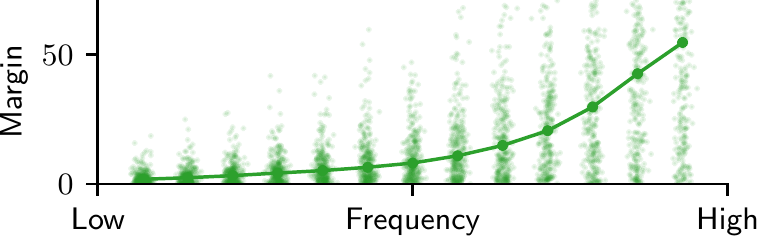}}\hfill
\subfigure[ResNet-18 (Train)]
{\includegraphics[width=0.33\textwidth]{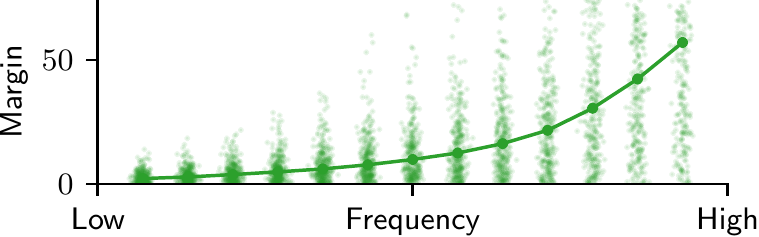}}\hfill
\subfigure[DenseNet-121 (Train)]
{\includegraphics[width=0.33\textwidth]{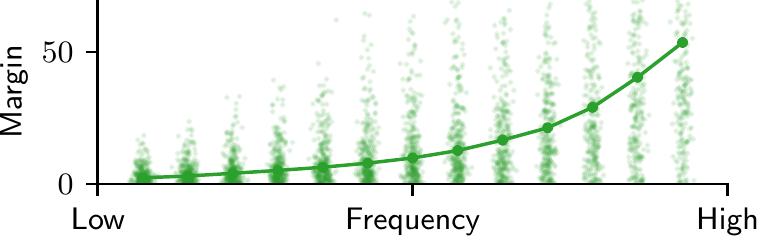}}\hfill
\caption{Diagonal \textbf{CIFAR-10} adversarially trained ($M=1,000$, $K=8$, $T=2$)}
\end{center}
\end{figure}

\begin{figure}[H]
\begin{center}
\subfigure[VGG-19 (Test)]
{\includegraphics[width=0.225\textwidth]{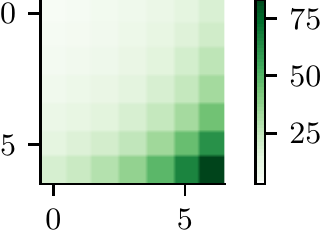}}
\hspace{2em}
\subfigure[ResNet-18 (Test)]
{\includegraphics[width=0.225\textwidth]{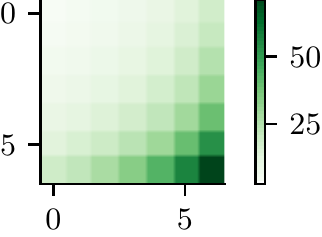}}
\hspace{2em}
\subfigure[DenseNet-121 (Test)]
{\includegraphics[width=0.225\textwidth]{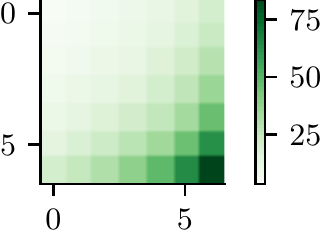}}

\subfigure[VGG-19 (Train)]
{\includegraphics[width=0.225\textwidth]{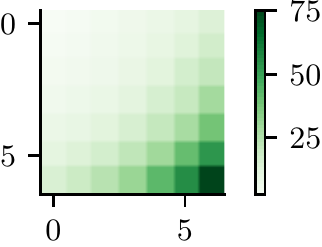}}
\hspace{2em}
\subfigure[ResNet-18 (Train)]
{\includegraphics[width=0.225\textwidth]{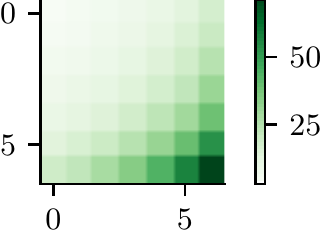}}
\hspace{2em}
\subfigure[DenseNet-121 (Train)]
{\includegraphics[width=0.225\textwidth]{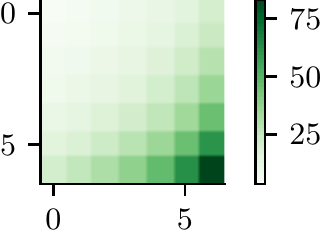}}
\caption{Grid \textbf{CIFAR-10} adversarially trained ($M=500$, $K=8$, $T=4$)}
\end{center}
\end{figure}

\subsection{CIFAR-10 ``flipped''}

\begin{figure}[H]
\begin{center}
\subfigure[VGG-16 (Test)]
{\includegraphics[width=0.33\textwidth]{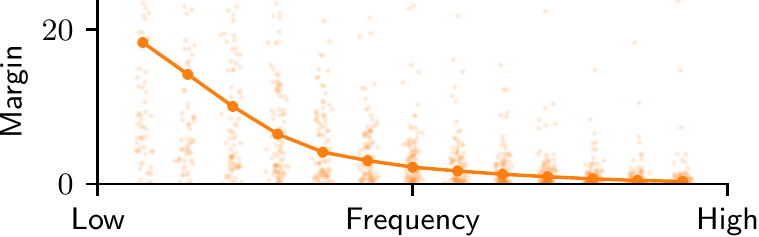}}\hfill
\subfigure[ResNet-18 (Test)]
{\includegraphics[width=0.33\textwidth]{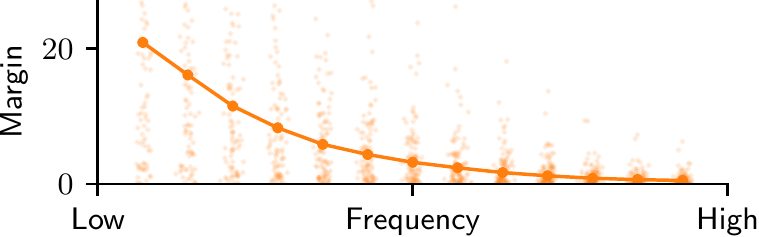}}\hfill
\subfigure[DenseNet-121 (Test)]
{\includegraphics[width=0.33\textwidth]{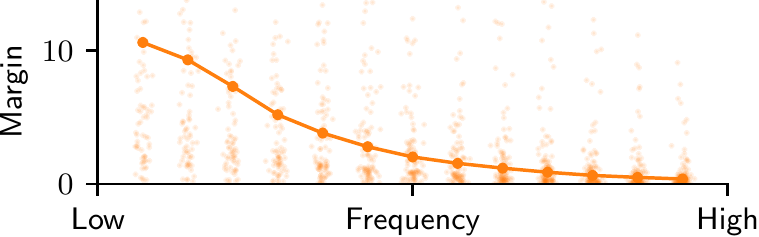}}\hfill

\subfigure[VGG-16 (Train)]
{\includegraphics[width=0.33\textwidth]{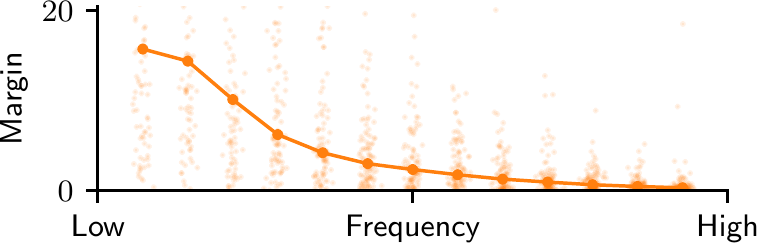}}\hfill
\subfigure[ResNet-18 (Train)]
{\includegraphics[width=0.33\textwidth]{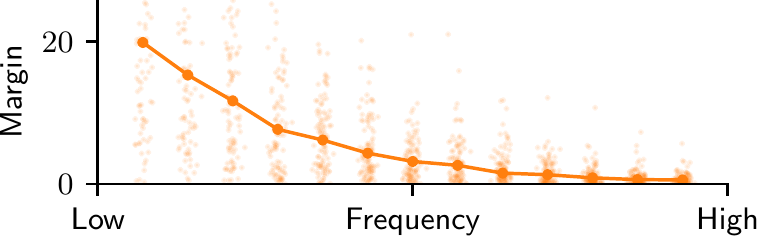}}\hfill
\subfigure[DenseNet-121 (Train)]
{\includegraphics[width=0.33\textwidth]{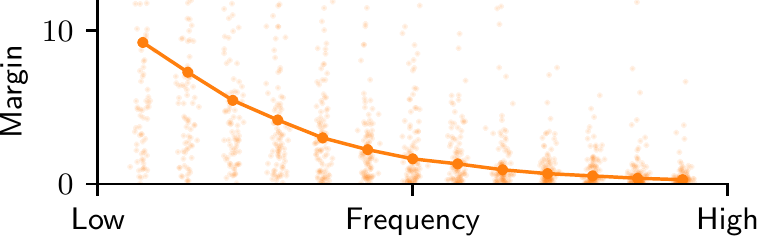}}\hfill
\caption{Diagonal \textbf{CIFAR-10 ``flipped''} adversarially trained ($M=1,000$, $K=8$, $T=2$)}
\end{center}
\end{figure}

\begin{figure}[H]
\begin{center}
\subfigure[VGG-19 (Test)]
{\includegraphics[width=0.225\textwidth]{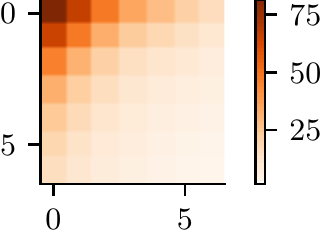}}
\hspace{2em}
\subfigure[ResNet-18 (Test)]
{\includegraphics[width=0.225\textwidth]{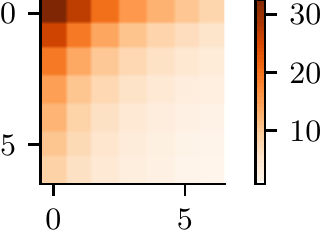}}
\hspace{2em}
\subfigure[DenseNet-121 (Test)]
{\includegraphics[width=0.225\textwidth]{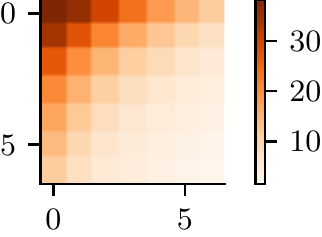}}

\subfigure[VGG-19 (Train)]
{\includegraphics[width=0.225\textwidth]{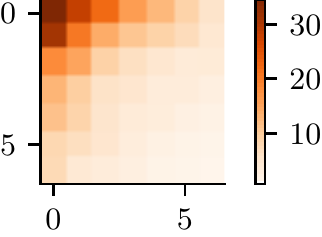}}
\hspace{2em}
\subfigure[ResNet-18 (Train)]
{\includegraphics[width=0.225\textwidth]{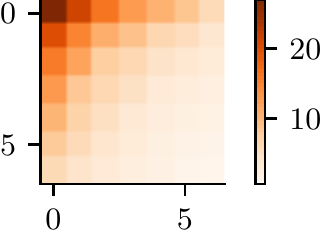}}
\hspace{2em}
\subfigure[DenseNet-121 (Train)]
{\includegraphics[width=0.225\textwidth]{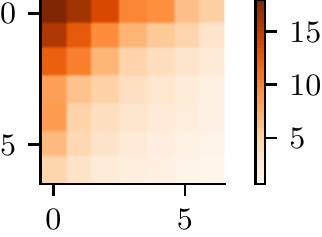}}
\caption{Grid \textbf{CIFAR-10 ``flipped''} adversarially trained ($M=500$, $K=8$, $T=4$)}
\end{center}
\end{figure}

\subsection{ImageNet}

\begin{figure}[H]
\begin{center}
\subfigure[ResNet-50 (Test)]
{\includegraphics[width=0.33\textwidth]{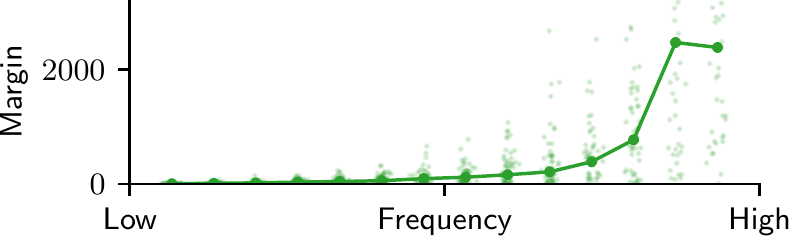}}
\subfigure[ResNet-50 (Train)]
{\includegraphics[width=0.33\textwidth]{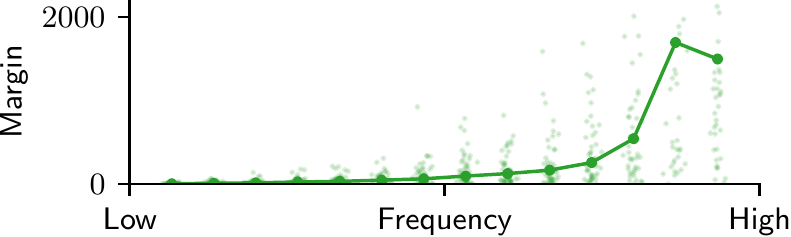}}
\caption{Diagonal \textbf{ImageNet} adversarially trained ($M=500$, $K=16$, $T=16$)}
\end{center}
\end{figure}

\begin{figure}[H]
\begin{center}
\subfigure[ResNet-50 (Test)]
{\includegraphics[width=0.225\textwidth]{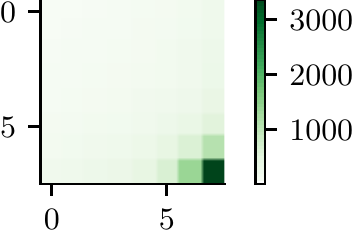}}
\hspace{2em}
\subfigure[ResNet-50 (Train)]
{\includegraphics[width=0.225\textwidth]{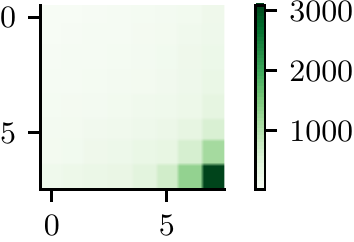}}
\caption{Grid \textbf{ImageNet} adversarially trained ($M=250$, $K=16$, $T=28$)}
\end{center}
\end{figure}

\section{Margin distribution on random subspaces}

Finally we show the same evaluation of Section~\ref{sec:margins_clean} performed using a random orthonormal basis instead of the DCT basis to demonstrate that the choice of basis is indeed important to identify the discriminative and non-discriminative directions of a network. Indeed, from Figure~\ref{fig:invariance_real_random} it is clear that a random basis is not valid for this task as the margin in any random subspace is of the same order with high probability~\cite{fawziRobustnessClassifiersAdversarial2016}.

\begin{figure}[ht]

\begin{center}
\subfigure[MNIST (Test: $99.35\%$)]
{\includegraphics[width=0.5\textwidth]{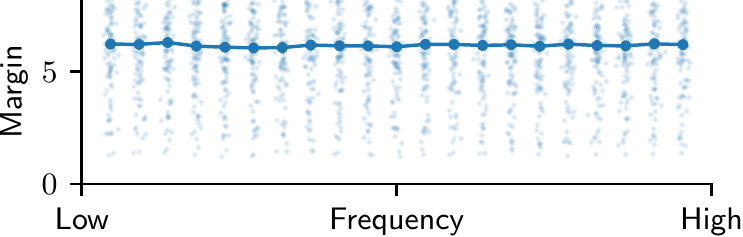}}\hfill
\subfigure[CIFAR-10 (Test: $93.03\%$)]
{\includegraphics[width=0.5\textwidth]{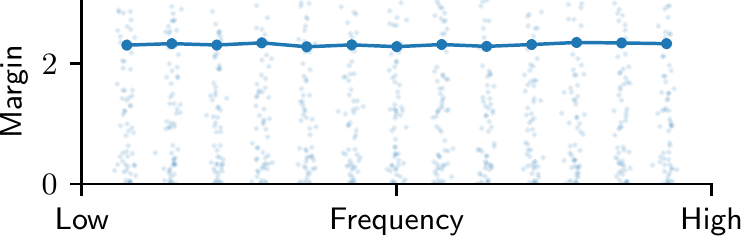}}\hfill

\subfigure[MNIST flipped (Test: $99.34\%$)]
{\includegraphics[width=0.5\textwidth]{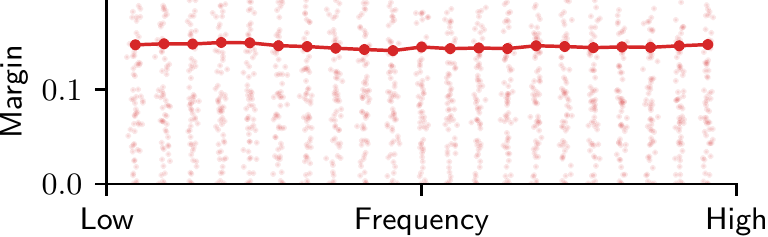}}\hfill
\subfigure[CIFAR-10 flipped (Test: $91.19\%$)]
{\includegraphics[width=0.5\textwidth]{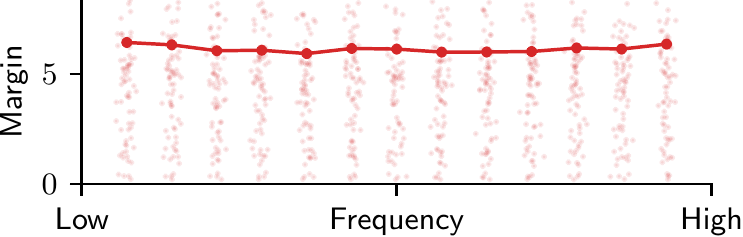}}\hfill
\caption{Margin distribution of test samples in subspaces taken from a random orthonormal matrix arranged as a tensor of the same dimensionality as the DCT tensor. Subspaces are taken from the diagonal with the same parameters as before. \textbf{Top}: (a) MNIST (LeNet), (b) CIFAR-10 (DenseNet-121) \textbf{Bottom}: (d) MNIST (LeNet) and (e) CIFAR-10 (DenseNet-121) trained on frequency ``flipped'' versions of the standard datasets.}
\label{fig:invariance_real_random}
\end{center}

\end{figure}

\end{document}